\documentclass{article}



\usepackage{microtype}
\usepackage{amsmath}
\usepackage{amsthm}
\usepackage{mathtools}
\usepackage{amsfonts}
\usepackage{graphicx}
\usepackage{subfigure}
\usepackage{color}
\usepackage{booktabs} 
\usepackage{wrapfig}
\usepackage{bbm}

\usepackage{hyperref}

\newtheorem{theorem}{Theorem}
\newtheorem{corollary}{Corollary}

\newcommand{\omt}[1]{}



\usepackage[accepted]{icml2019}

\icmltitlerunning{Direct Uncertainty Prediction for Medical Second Opinions}

\begin{document}

\twocolumn[
\icmltitle{Direct Uncertainty Prediction for Medical Second Opinions}



\icmlsetsymbol{equal}{*}

\begin{icmlauthorlist}
\icmlauthor{Maithra Raghu}{equal,co,goo}
\icmlauthor{Katy Blumer}{equal,goo}
\icmlauthor{Rory Sayres}{goo}
\icmlauthor{Ziad Obermeyer}{uc}
\icmlauthor{Robert Kleinberg}{co}
\icmlauthor{Sendhil Mullainathan}{chi}
\icmlauthor{Jon Kleinberg}{co}
\end{icmlauthorlist}

\icmlaffiliation{co}{Department of Computer Science, Cornell University}
\icmlaffiliation{goo}{Google Brain}
\icmlaffiliation{uc}{UC Berkeley School of Public Health}
\icmlaffiliation{chi}{Chicago Booth School of Business}

\icmlcorrespondingauthor{Maithra Raghu}{maithrar@gmail.com}

\icmlkeywords{Machine Learning, ICML}

\vskip 0.3in
]



\printAffiliationsAndNotice{\icmlEqualContribution} 

\begin{abstract}
The issue of disagreements amongst human experts is a ubiquitous one in both machine learning and medicine. In medicine, this often corresponds to doctor disagreements on a patient diagnosis. In this work, we show that machine learning models can be  trained to give \textit{uncertainty scores} to data instances that might result in high expert disagreements. In particular, they can identify patient cases that would benefit most from a \textit{medical second opinion}. Our central methodological finding is that \textit{Direct Uncertainty Prediction} (DUP), training a model to predict an uncertainty score directly from the raw patient features, works better than \textit{Uncertainty Via Classification}, the two-step process of training a classifier and postprocessing the output distribution to give an uncertainty score. We show this both with a theoretical result, and on extensive evaluations on a large scale medical imaging application. 
\end{abstract}

\section{Introduction}
In both the practice of machine learning and the practice of medicine, a serious challenge is presented by disagreements amongst human labels. Machine learning classification models are typically developed on large datasets consisting of $(x_i, y_i)$ (data instance, label) pairs. These are collected \cite{RussakovskyECCV10,Welinder2010Crowdsourcing} by assigning each raw instance $x_i$ to multiple human evaluators, yielding several labels $y_i^{(1)}, y_i^{(2)},...y_i^{(n_i)}$. Unsurprisingly, these labels often have disagreements amongst them and must be carefully aggregated to give a single target value. 

This label disagreement issue becomes a full-fledged clinical problem in the healthcare domain. Despite the human labellers now being highly trained medical experts (doctors), disagreements (on the diagnosis) persist \cite{van2017extent, abrams1994opthal, ICDRStandards, Gulshan2016Retinal, rajpurkar2017chexnet}. One example is \cite{van2017extent}, where agreement between referral and final diagnoses in a cohort of two hundred and eighty patients is studied. \textit{Exact} agreement is only found in $12\%$ of cases, but more concerningly, $21\%$ of cases have significant disagreements. This latter group also turns out to be the most costly to treat. Other examples are given by \cite{daniel2004toman}, a study of tuberculosis diagnosis, showing that radiologists disagree with colleagues $25\%$ of the time, and with \textit{themselves} $20\%$ of the time, and \cite{elmore2015diagnostic}, studying disagreement on cancer diagnosis from breast biopsies.

These disagreements arise not solely from random noise \cite{Rolnick2017Noise}, but from expert judgment and bias. In particular, some patient cases $x_i$ intrinsically contain features that result in greater expert uncertainty (e.g. Figure \ref{fig-data-disagreement}.) This motivates applying machine learning to \textit{predict} which patients are likely to give rise to the most doctor disagreement. We call this the \textit{medical second opinion} problem. Such a model could be deployed to automatically identify patients that might need a second doctor's opinion.

\begin{figure}
\includegraphics[width=\columnwidth]{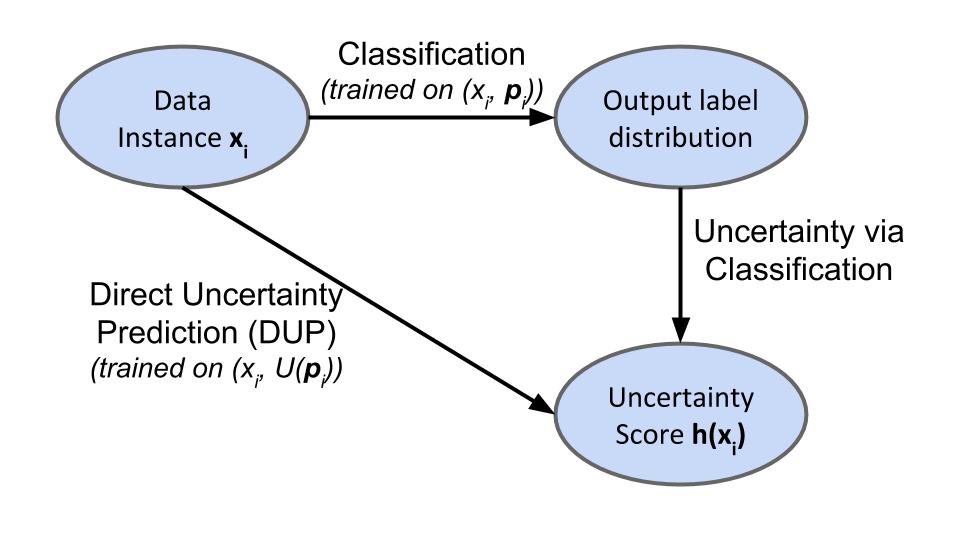}
\vspace{-3.5mm}
\caption{\small \textbf{Different ways of computing an uncertainty scores.} An uncertainty score $h(x_i)$ for $x_i$ can be computed by the two step process of Uncertainty via Classification: training a classifier on pairs (data instance, empirical grade distribution from $y_i^{(j)}$) $(x_i, \mathbf{p}_i)$, and then post processing the classifier output distribution to get an uncertainty score. $h(x_i)$ can also be learned directly on $x_i$, i.e. Direct Uncertainty Prediction. DUP models are trained on pairs (data instance, target uncertainty function on empirical grade distribution), $(x_i, U(\mathbf{p}_i))$. Theoretical and empirical results support the greater effectiveness of Direct Uncertainty Prediction.}
\label{fig:uncertainty-prediction}
\vspace{-3.5mm}
\end{figure}
Mathematically, given a patient instance $x_i$, we are interested in assigning a scalar uncertainty score to $x_i$, $h(x_i)$ that reflects the amount of expert disagreement on $x_i$. For each $x_i$, we have multiple labels $y_i^{(1)}, y_i^{(2)},...y_i^{(n_i)}$, each corresponding to a different individual doctor's grade. 

One natural approach is to first train a classifier mapping $x_i$ to the $y_i^{(j)}$, e.g. via the empirical distribution of labels $\hat{\mathbf{p}}_i$. For ungraded examples, a measure of spread of the output distribution of the classifier (e.g. variance) could be used to give a score. We call this \textit{Uncertainty via Classification (UVC)}. 

An alternate approach, \textit{Direct Uncertainty Prediction (DUP)}, is to learn a function $h_{dup}$ directly mapping $x_i$ to a scalar uncertainty score. The basic contrast with Uncertainty via Classification is illustrated in Figure \ref{fig:uncertainty-prediction}. Our central methodological finding is that Direct Uncertainty Prediction (provably) works better than the two step process of Uncertainty via Classification.

In particular, our three main contributions are the following:
\begin{enumerate}
\item We define simple methods of performing Direct Uncertainty Prediction on data instances $x_i$ with multiple noisy labels. We prove that under a natural model for the data, DUP gives an \textit{unbiased} estimate of the true uncertainty scores $U(x_i)$, while Uncertainty via Classification has a bias term. We then demonstrate this in a synthetic setting of mixtures of Gaussians, and on an image blurring detection task on the standard SVHN and CIFAR-10 benchmarks.
\item We train UVC and DUP models on a large-scale medical imaging task. As predicted by the theory, we find that DUP models perform better at identifying patient cases that will result in large disagreements amongst doctors. 
\item On a small gold standard \textit{adjudicated} test set, we study how well our existing DUP and UVC models can identify patient cases where the individual doctor grade disagrees with a consensus \textit{adjudicated} diagnosis. This adjudicated grade is a proxy for the best possible doctor diagnosis. All DUP models perform better than all UVC models on all evaluations on this task, in both an uncertainty score setting and a ranking application. 
\end{enumerate}
\begin{figure}
  \centering
  \begin{tabular}{cc}
 \includegraphics[width=0.3\columnwidth]{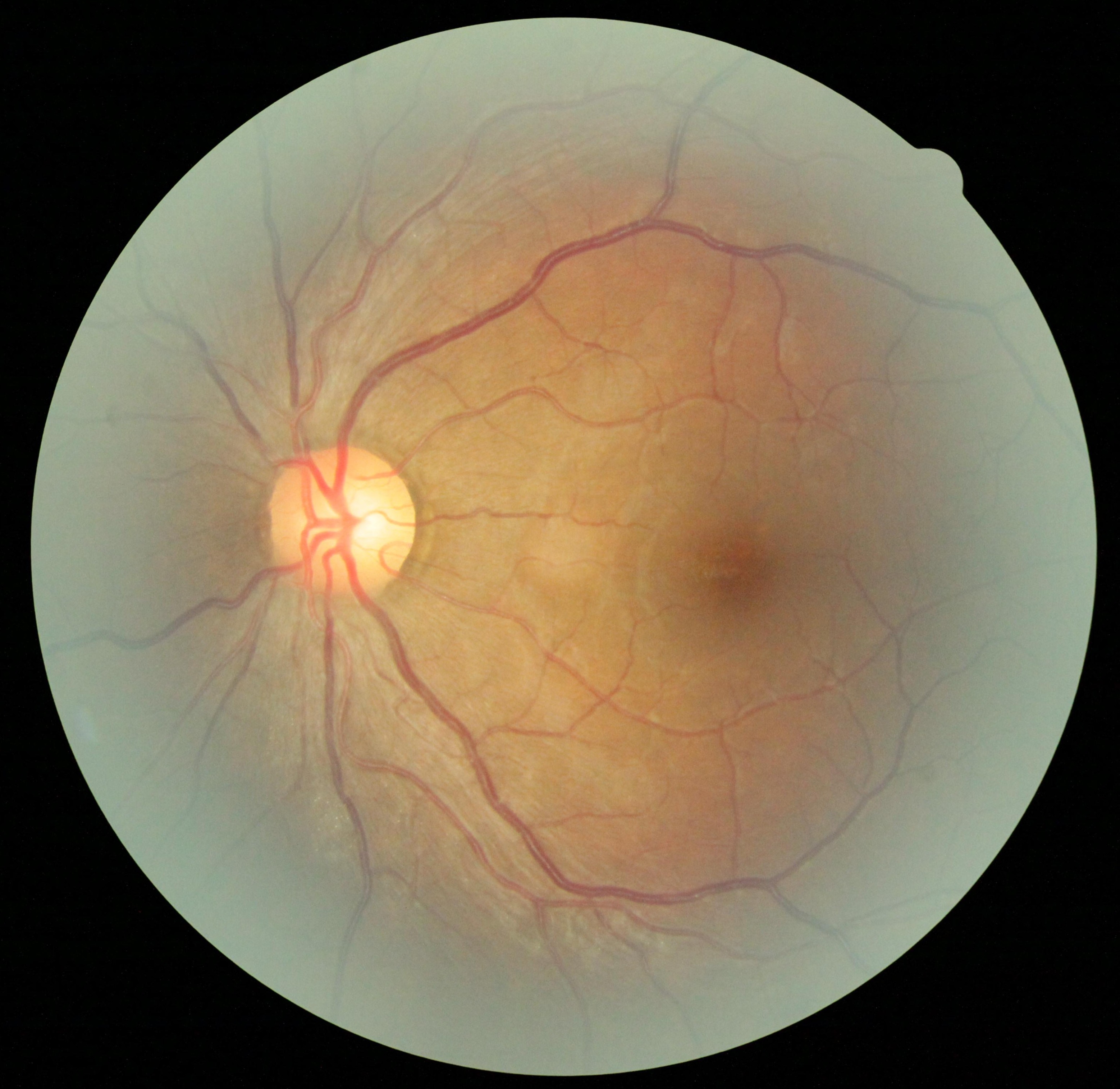}
  &
 \includegraphics[width=0.4\columnwidth]{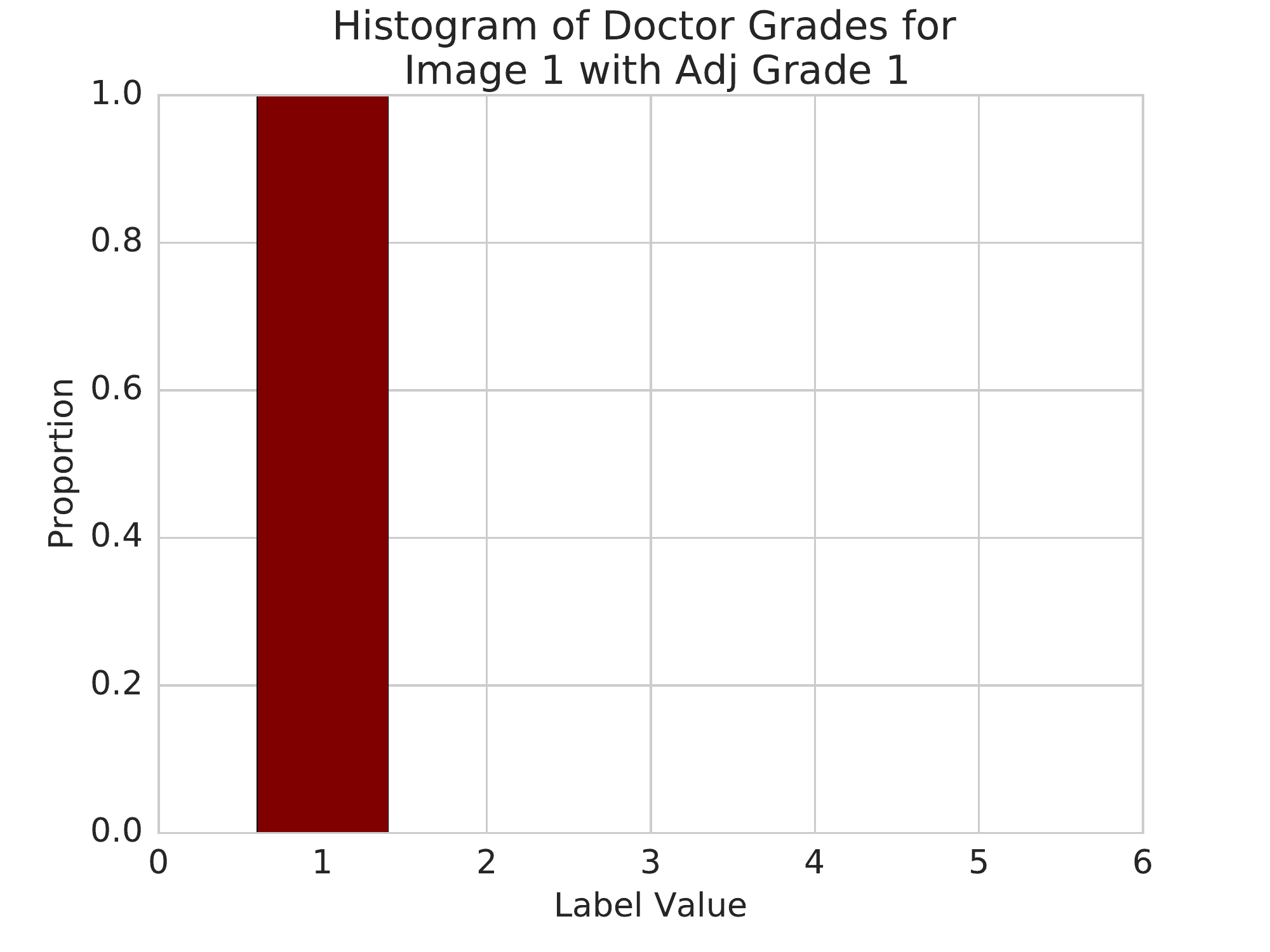}\hspace*{-3mm} \\
 \includegraphics[width=0.3\columnwidth]{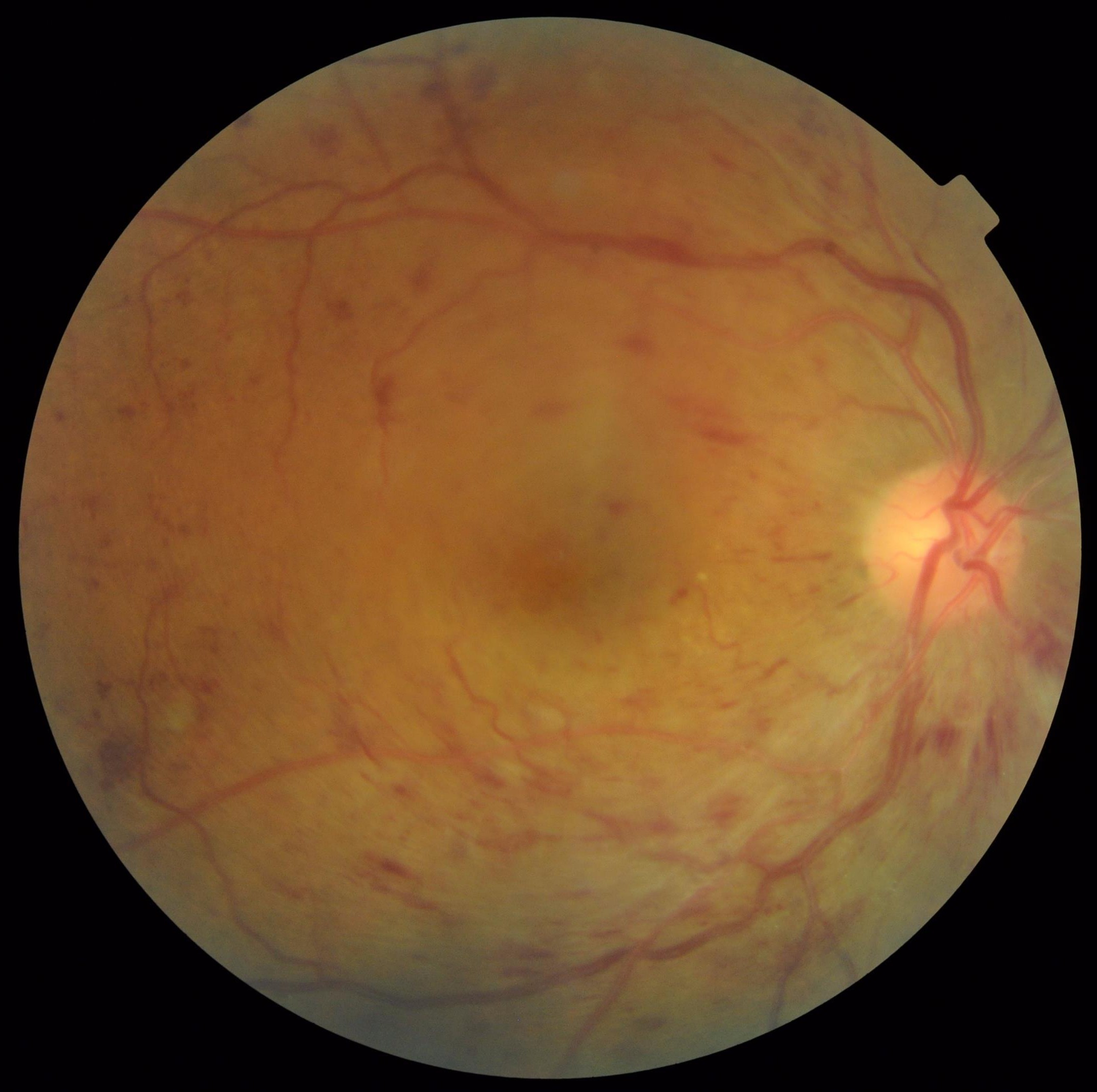}
  &
  \includegraphics[width=0.4\columnwidth]{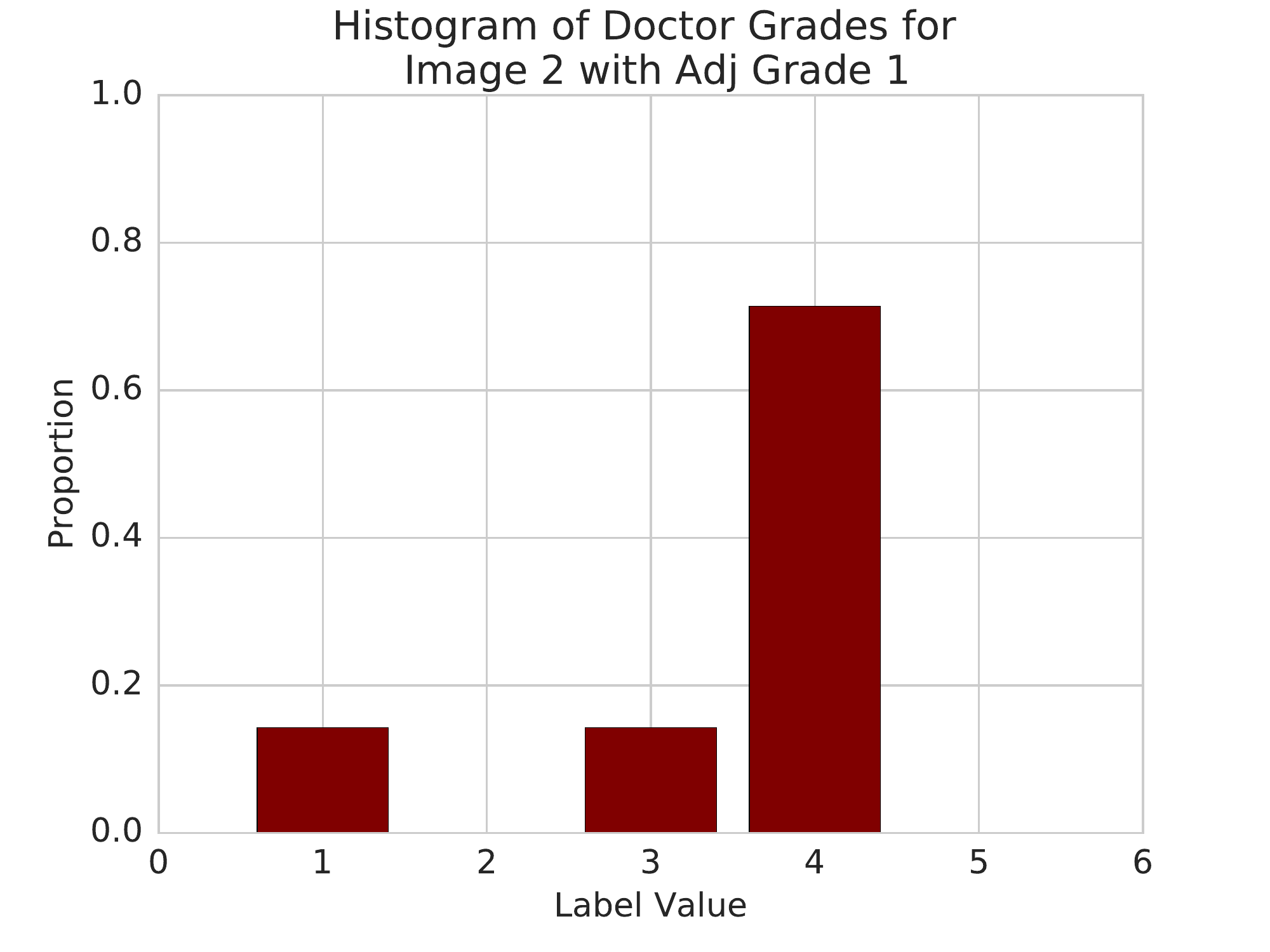}\hspace*{-3mm}
  \end{tabular}
  \caption{\small \textbf{Patient cases have features resulting in higher doctor disagreement.} The two rows give example datapoints in our dataset. The patient images $x_i, x_j$ are in the left column, and on the right we have the empirical probability distribution (histogram) of the multiple individual doctor DR grades. For the top image, all doctors agreed that the grade should be $1$, while there was a significant spread for the bottom image. When later performing an \textit{adjudication} process (Section \ref{sec-adj-evaluation}), where doctors discuss their initial diagnoses with each other and come to a consensus, both patient cases were given an \textit{adjudicated} DR grade of $1$.}
  \label{fig-data-disagreement}
  \vspace{-3.5mm}
\end{figure}
\section{Direct Uncertainty Prediction}
\label{sec-direct-uncertainty-pred}
Our core prediction problem, motivated by identifying patients who need a \textit{medical second opinion}, centers around learning a scalar \textit{uncertainty scoring function} $h(x)$ on patient instances $x$, which signifies the amount of expert disagreement arising from $x$. 

To do so, we must first define a \textit{target} uncertainty scoring function $U(\cdot)$. Our data consists of pairs of the form (patient features, multiple individual doctor labels),  $(x_i; y_i^{(1)}, y_i^{(2)},...y_i^{(n_i)})$ (Figure \ref{fig-data-disagreement}). Letting $c_1,...,c_k$ be the different possible doctor grades, we can define the empirical grade distribution -- the empirical \textit{histogram}: $\mathbf{\hat{p}_i} = [ \hat{p}_i^{(1)},...,\hat{p}_i^{(k)} ]$, with
\[ \hat{p}_i^{(l)} = \frac{\sum_j \mathbbm{1}_{y_i^{(j)} = c_l}}{n_i} \]
Our target uncertainty scoring function $U(\cdot)$ then computes an uncertainty score for $x_i$ using the empirical histogram $\mathbf{\hat{p}_i}$. One such function, which computes the probability that two draws from the empirical histogram will disagree is
\[ U_{disagree}(x_i) = U_{disagree}(\mathbf{\hat{p}_i}) = 1 - \sum_{l=1}^k (\hat{p}_i^{(l)})^2  \tag{1}  \]
Another uncertainty score, which penalizes larger disagreements more, is the variance:
\[ U_{var}(x_i) = U_{var}(\mathbf{\hat{p}_i}) = \sum_{l=1}^k c_l \cdot (\hat{p}_i^{(l)})^2 - \left(\sum c_l \cdot \hat{p}_i^{(l)} \right)^2 (2)  \]

For a large family of these uncertainty scoring functions (including entropy, variance, etc) we can show that Direct Uncertainty Prediction gives an unbiased estimate of $U(\hat{\mathbf{p}}_i)$, whereas uncertainty via classification has a bias term. 

The key observation is that while we want our model to predict doctor disagreement, it does not see all the patient information the doctors do. In particular, the model must predict doctor disagreement based off of only $x_i$ (in our setting, images). In contrast, human doctors see not only $x_i$ but other extensive information, such as patient medical history, family medical history, patient characteristics (age, symptom descriptions, etc) \cite{de2018clinically}. 

Letting $o$ denote all patient features seen by the doctors, we can think of $x_i$ as being the image of $o$ under a (many to one) mapping $g$, which hides the additional patient information, i.e. $x_i = g(o)$. Suppose there are $k$ possible doctor grades, $c_1,..,c_k$. Let $f$ denote the joint distribution over patient features and doctor grades. In particular, let $O$ be a random variable representing patient features, and $Y$ the doctor label for $O$. Then our density function assigns a probability to (patient features, doctor grade) pairs $(o, y)$.

This can also be defined with a vectorized version of the grades: let $Y_l = \mathbf{1}_{Y = c_l}$, the event that $O$ is diagnosed as $c_l$. Then we define the vector $\mathbf{Y} = [Y_1,...,Y_k]$. $f$ is therefore also a density over the points $f(O=o, \mathbf{Y}=\mathbf{y})$. Let the marginal probability of the patient features be $f_O$, with $f_O = \int_{\mathbf{y}} f(O, \mathbf{y})$.

Given an uncertainty scoring function $U(\cdot)$, we would like to predict the disagreement in labels amongst doctors who have seen the patient features $O$. But as the patient features $O$ and doctor grades $\mathbf{Y}$ are jointly distributed according to $f$, this is just the uncertainty of the expected value of $\mathbf{Y}$ under the posterior of $\mathbf{Y}$ given $o$. In particular, we are interested in predicting:
\[ U\left( \int_\mathbf{y} \mathbf{y} \cdot f(\mathbf{Y} = \mathbf{y} | O) \right) = U(\mathbf{E}[\mathbf{Y} | O]) \]
This is a function taking as input a patient's features. For a particular patient's features $o$, we get a scalar uncertainty score given by \[ U(\mathbf{E}[\mathbf{Y} | O = o])\]

However, our model doesn't see $o$, but only $x = g(O)$. We make the mild assumptions that $Y$ is conditionally independent of $g(O)$ given $O$, and that $g(\cdot)$ truly hides information, loosely that $O | g(O)=x$ is not a point mass (see Appendix for details.) In this setting, direct uncertainty prediction, $h_{dup}$ computes the expectation of the uncertainty scores of all the possible posteriors, i.e.
\begin{align*}
h_{dup}(x) &= \mathbb{E} \left[ U(\mathbb{E}[\mathbf{Y}|O]) \vert g(O) = x \right] \\
& = \int_o U(\mathbb{E}[\mathbf{Y} | O=o])f_O(o|g(O) = x)
\end{align*}
Uncertainty via classification $h_{uvc}$ does this in reverse order, first computing the expected posterior, and assigning an uncertainty score to that:
\begin{align*}
h_{uvc}(x) &= U(\mathbb{E} [\mathbf{Y} \vert g(O) = x]) \\
&= U \left( \int_o \mathbb{E}[\mathbf{Y} | O=o]f_O(o|g(O) = x) \right)
\end{align*}
In this setting we can show
\begin{theorem}
\label{thm-dup-unbiased}
Using the above notation
\begin{enumerate}
\item[(i)] $h_{dup}$ is an unbiased estimate of the true uncertainty
\item[(ii)] For any concave uncertainty scoring function $U(\cdot)$ (which includes $U_{disagree}, U_{var}$), uncertainty via classification, $h_{uvc}$ has a bias term.
\end{enumerate}
\end{theorem}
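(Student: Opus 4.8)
The plan is to reduce both parts to two standard facts — the law of total expectation (smoothing) and Jensen's inequality — after isolating the single random object that drives everything: the per-patient posterior grade vector $Z := \mathbb{E}[\mathbf{Y}\mid O]$, so that the ground-truth uncertainty attached to a \emph{fully observed} patient is the random variable $T := U(Z) = U(\mathbb{E}[\mathbf{Y}\mid O])$. Both estimators are then seen to be functions only of $x=g(O)$, and they differ solely in whether $U$ is applied before or after conditioning on $x$.

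For (i), I would first use the conditional independence assumption $Y \perp g(O)\mid O$ to argue that the relevant posterior is $Z=\mathbb{E}[\mathbf{Y}\mid O]=\mathbb{E}[\mathbf{Y}\mid O, g(O)]$, so that $T$ is well defined as a function of $O$ alone and the target the model \emph{should} report for a patient seen only through $x$ is the conditional mean $\mathbb{E}[T\mid g(O)=x]$. But this is exactly the definition of $h_{dup}(x)$. Hence $h_{dup}(g(O)) - T$ has conditional mean zero given $g(O)=x$ for every $x$, so $h_{dup}$ is conditionally — and therefore also unconditionally — unbiased. This part is essentially definitional; the only substantive check is that the conditional independence assumption permits replacing $\mathbb{E}[\mathbf{Y}\mid O,g(O)]$ by $\mathbb{E}[\mathbf{Y}\mid O]$.

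For (ii), I would rewrite $h_{uvc}$ in terms of $Z$ by smoothing:
\[ \mathbb{E}[\mathbf{Y}\mid g(O)=x] = \mathbb{E}\big[\mathbb{E}[\mathbf{Y}\mid O]\,\big|\,g(O)=x\big] = \mathbb{E}[Z\mid g(O)=x], \]
so that $h_{uvc}(x)=U\big(\mathbb{E}[Z\mid g(O)=x]\big)$ while $h_{dup}(x)=\mathbb{E}\big[U(Z)\mid g(O)=x\big]$. The two now differ only in the order of $U$ and the conditional expectation, and for concave $U$ Jensen's inequality gives $h_{dup}(x)\le h_{uvc}(x)$, with the bias
\[ b(x) := h_{uvc}(x) - \mathbb{E}[T\mid g(O)=x] = U\big(\mathbb{E}[Z\mid g(O)=x]\big) - \mathbb{E}\big[U(Z)\mid g(O)=x\big] \ge 0 \]
equal to the Jensen gap. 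To upgrade ``$\ge 0$'' to a genuinely nonzero bias I would invoke the two standing assumptions. First, $U_{disagree}$ and $U_{var}$ are strictly concave in $\mathbf{p}$: the Hessian of $U_{disagree}$ is $-2I$, and $U_{var}$ is a term linear in $\mathbf{p}$ (namely $\sum_l c_l^2 p_l$) plus the strictly concave term $-\big(\sum_l c_l p_l\big)^2$. Second, the ``$g$ truly hides information'' assumption makes the conditional law of $Z$ given $g(O)=x$ non-degenerate (not a point mass) on a set of $x$ of positive measure, so strict Jensen yields $b(x)>0$ there.

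The main obstacle I anticipate is not the inequality but pinning down strictness: one must argue that ``$O\mid g(O)=x$ is not a point mass'' actually propagates to ``$Z=\mathbb{E}[\mathbf{Y}\mid O]$ is not almost surely constant given $g(O)=x$.'' This can fail if $g$ hides only information irrelevant to the grade distribution (e.g.\ if $Z$ happens to be $g(O)$-measurable), collapsing the Jensen gap to zero; so the Appendix hypotheses must be strong enough to exclude that degeneracy, and it is precisely this non-degeneracy together with strict concavity of $U$ that produces a strictly positive bias. Verifying the concavity of the specific scoring functions is a short Hessian computation and is the only routine calculation the argument requires.
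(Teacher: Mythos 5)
Your overall route is the same as the paper's: part (i) is the tower law applied to $h_{dup}(x)=\mathbb{E}[U(Z)\mid g(O)=x]$ with $Z=\mathbb{E}[\mathbf{Y}\mid O]$, and part (ii) rewrites $h_{uvc}(x)=U\big(\mathbb{E}[Z\mid g(O)=x]\big)$ (using $Y \perp g(O)\mid O$, exactly as the appendix does) and identifies the bias with the Jensen gap for concave $U$, made nonzero by the assumption that $g$ truly hides information. Up to notation, this is the paper's proof.

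There is one concrete error: your claim that $U_{var}$ is strictly concave in $\mathbf{p}$ is false. Writing $U_{var}(\mathbf{p})=\sum_l c_l^2 p_l - \big(\sum_l c_l p_l\big)^2$, the Hessian is $-2\,cc^{\mathsf{T}}$ with $c=(c_1,\dots,c_k)$, a rank-one matrix; $U_{var}$ is flat along every direction orthogonal to $c$, i.e.\ along perturbations of $\mathbf{p}$ that preserve the mean grade. (Your statement that $-\big(\sum_l c_l p_l\big)^2$ is ``strictly concave'' confuses strict concavity in the scalar $\sum_l c_l p_l$ with strict concavity in $\mathbf{p}$.) Consequently your proposed strictness argument --- non-degeneracy of the conditional law of $Z$ given $g(O)=x$ plus strict concavity --- works for $U_{disagree}$ (Hessian $-2I$) but fails for $U_{var}$: the posteriors $Z$ can genuinely vary given $g(O)=x$ while $\sum_l c_l Z_l$ stays constant, in which case the $U_{var}$ Jensen gap is exactly zero. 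The correct non-degeneracy condition for $U_{var}$ is that the posterior \emph{mean grade} $\sum_l c_l Z_l$ not be almost surely constant given $g(O)$ --- precisely what the exact bias formula of Corollary~\ref{corollary-dup}(ii) makes visible, since the bias there is $\mathbb{E}_{g(O)}\big[\mathrm{Var}\big(\sum_l l\cdot\mathbb{E}[Y_l\mid O]\,\big\vert\, g(O)\big)\big]$. To your credit, your closing paragraph flags the right class of failure ($Z$ being $g(O)$-measurable collapses the gap), but for $U_{var}$ the degeneracy is direction-specific and your strict-concavity claim would paper over it; note also that the paper's own assertion that the inequality is ``strict whenever the distribution of posteriors is not a point mass'' is correct only for strictly concave $U$ and is subject to the same caveat for $U_{var}$.
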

The full proof is the Appendix. A sketch is as follows: the unbiased result arises from the tower rule (law of total expectations). The bias of $h_{uvc}$ follows by the concavity of $U()$, Jensen's inequality, and the fact that $g(\cdot)$ truly hides some patient features. For $U_{disagree}$ and $U_{var}$, we can compute this bias term exactly (full computation in Appendix): 
\begin{corollary}
\label{corollary-dup}
For $U_{disagree}, U_{var}$ the bias term is:
\begin{enumerate}
\item[(i)] Bias of $h_{uvc}$ with $U_{disagree}$: \[ \mathbb{E}_{g(O)} \left[ \sum_l Var_{O|g(O)}\Big(\mathbb{E}[Y_l | O] \Big| g(O) \Big) \right] \] 
\item[(ii)] Bias of $h_{uvc}$ with $U_{var}$: \[ \mathbb{E}_{g(O)} \left[ Var_{O|g(O)}\left(\left. \sum_l l \cdot \mathbb{E}[Y_l | O] \right\vert g(O) \right) \right] \] 
\end{enumerate}
\end{corollary}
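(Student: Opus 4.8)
The plan is to compute the pointwise gap $h_{uvc}(x) - h_{dup}(x)$ in closed form for each of the two scoring functions and then take the outer expectation over $g(O)$ to recover the stated bias. Throughout I write $p_l(o) = \mathbb{E}[Y_l \mid O = o]$ for the per-$o$ conditional grade probabilities, and abbreviate expectation and variance over the posterior $O \mid g(O) = x$ by $\mathbb{E}_{O|x}$ and $\mathrm{Var}_{O|x}$. The step common to both parts is to use the conditional independence $Y \perp g(O) \mid O$ to rewrite the classifier's posterior mean: applying the tower rule restricted to the fiber $\{g(O) = x\}$ gives $\mathbb{E}[\mathbf{Y} \mid g(O) = x] = \mathbb{E}_{O|x}[\mathbf{p}(O)]$, so the $l$-th coordinate fed to $U$ in the UVC model is exactly $\mathbb{E}_{O|x}[p_l(O)]$. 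This matches the integral form of $h_{uvc}$ given before Theorem~\ref{thm-dup-unbiased}.

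For part (i), I would substitute $U_{disagree}(\mathbf{p}) = 1 - \sum_l p_l^2$ into both definitions. Pushing the linear operator $\mathbb{E}_{O|x}$ through the constant and the sum yields $h_{uvc}(x) = 1 - \sum_l (\mathbb{E}_{O|x}[p_l(O)])^2$ and $h_{dup}(x) = 1 - \sum_l \mathbb{E}_{O|x}[p_l(O)^2]$. Subtracting, the $1$'s cancel and each coordinate contributes $\mathbb{E}_{O|x}[p_l(O)^2] - (\mathbb{E}_{O|x}[p_l(O)])^2 = \mathrm{Var}_{O|x}(p_l(O))$; summing over $l$ and taking $\mathbb{E}_{g(O)}$ of the resulting per-$x$ gap gives the claimed bias, since $p_l(O) = \mathbb{E}[Y_l \mid O]$.

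For part (ii), the cleanest route is to recognize $U_{var}(\mathbf{p}(o))$ as a conditional variance of the grade: with $c_l = l$ one has $U_{var}(\mathbf{p}(o)) = \mathrm{Var}(Y \mid O = o)$, and correspondingly $h_{uvc}(x) = U_{var}(\mathbb{E}_{O|x}[\mathbf{p}(O)]) = \mathrm{Var}(Y \mid g(O) = x)$. The conditional independence then lets me apply the law of total variance inside the fiber, $\mathrm{Var}(Y \mid g(O) = x) = \mathbb{E}_{O|x}[\mathrm{Var}(Y \mid O)] + \mathrm{Var}_{O|x}(\mathbb{E}[Y \mid O])$. The first summand is exactly $h_{dup}(x)$, so the per-$x$ gap is $\mathrm{Var}_{O|x}(\sum_l l \cdot p_l(O))$; taking $\mathbb{E}_{g(O)}$ produces the stated expression.

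The algebra is light; the step that needs care is the first one, justifying $\mathbb{E}[\mathbf{Y} \mid g(O) = x] = \mathbb{E}_{O|x}[\mathbb{E}[\mathbf{Y} \mid O]]$ from $Y \perp g(O) \mid O$ so that conditioning on the fiber does not alter the within-$o$ grade law, and in part (ii) matching the law-of-total-variance decomposition to the definitions of $h_{dup}$ and $h_{uvc}$ rather than re-expanding coordinatewise. Both parts ultimately rest on the same fact, that $\mathbb{E}[U(\cdot)]$ and $U(\mathbb{E}[\cdot])$ differ by a (coordinatewise or scalar) posterior variance when $U$ is quadratic; this is why the concavity-plus-Jensen inequality of Theorem~\ref{thm-dup-unbiased} sharpens to an \emph{exact} identity here, with the nonnegative variance confirming that $h_{uvc}$ systematically overestimates.
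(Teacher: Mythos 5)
Your proposal is correct and matches the paper's proof in substance: part (i) is exactly the paper's computation, expanding the quadratic $U_{disagree}$ for both $h_{uvc}$ and $h_{dup}$, cancelling the constants, recognizing the coordinatewise gap as $\mathrm{Var}_{O|g(O)}\bigl(\mathbb{E}[Y_l \mid O]\bigr)$ summed over $l$, and taking the outer expectation over $g(O)$ (justified, as you note, because $h_{dup}$ is unbiased, so the bias equals $\mathbb{E}[h_{uvc}-h_{dup}]$). Your part (ii) repackages the paper's term-by-term expansion---where the $\sum_l l^2$ terms cancel by the tower rule and the leftover second-moment gap is recognized as $\mathrm{Var}\bigl(\sum_l l\cdot\mathbb{E}[Y_l\mid O] \,\big\vert\, g(O)\bigr)$---as a single invocation of the law of total variance applied to $\mathrm{Var}(Y \mid g(O)=x)$, which is the same identity in a cleaner, more conceptual form and makes transparent why the Jensen gap of Theorem~\ref{thm-dup-unbiased} becomes exact for quadratic $U$.
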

In Sections \ref{sec-medical-setting}, \ref{sec-adj-evaluation} we train both Direct Uncertainty Prediction (DUP) models and Uncertainty Via Classification (UVC) models on a large scale medical imaging task. However, to gain intuition for the theoretical results, we first study a toy case on a mixture of Gaussians.

\subsection{Toy Example on Mixture of Gaussians}
To illustrate the formalism in a simplified setting, we consider the following pedagogical toy example.
Suppose our data is generated by a mixture of $k$ Gaussians. Let $f_i \sim \mathcal{N}(\mu_i, \sigma^2_i)$, and $q_i$ be mixture probabilities. Then $f(o, y=i) = q_if_i(o)$ and the marginal $f_O(o) = \sum_{i=1}^kq_if_i(o)$. Additionally, the probability of a particular class $l$ given $o$, $f(y=l|o)$ is simply $\frac{q_lf_l(o)}{\sum_{i=1}^k q_if_i(o)}$.
\begin{table}
  \centering
  \hskip-1.0cm\begin{tabular}{llll}
    \toprule 
    \textbf{Model Type} & \textbf{$(3d, 5G)$} & \textbf{$(5d,4G)$} & \textbf{$(10d,4G)$} \\
    \midrule
    UVC &  $69.1\%$ &  $62.0\%$ &  $56.0\%$ \\
    DUP &  $\textbf{74.6}\%$ &  $\textbf{71.2}\%$ &  $\textbf{63.4}\%$ \\
    \bottomrule
  \end{tabular}
  \caption{\small \textbf{DUP and UVC trained to predict disagreement on mixtures of Gaussians.} We train DUP and UVC models on different mixtures of Gaussians, with$(nd, mG)$ denoting a mixture of $m$ Gaussians in $m$ dimensions. Results are in percentage AUC over 3 repeats. The means of the Gaussians are drawn iid from a multivariate normal distribution (full setup in Appendix.) We see that the DUP model performs much better than the UVC model at identifying datapoints with high disagreement in the labels.}
  \label{table-gaussian-mixture}
  \vspace{-3mm}
\end{table}

\textbf{Two 1-D Gaussians:} As the first, most simple case, suppose we have two one dimensional Gaussians, the first, $f_1 = \mathcal{N}(-1, 1)$ and the second, $f_2 = \mathcal{N}(1, 1)$. Assume that the mixture probabilities $q_1, q_2$ are equal to $0.5$. Given $o$ drawn from this mixture $q_1f_1 + q_2f_2$, we'd like to estimate $U(f(y|o))$. Suppose the model sees $x = g(o) = |o|$, the absolute value of $o$. Then, DUP can estimate the uncertainty exactly:
\begin{align*}
\mathbb{E} \left[ U(\mathbb{E}[\mathbf{Y}|O]) \vert x = |o| \right] = & 0.5\cdot U(\mathbb{E}[\mathbf{Y}|O =o]) \\
& + 0.5 \cdot U(\mathbb{E}[\mathbf{Y}|O=-o]) \\
&= U(\mathbb{E}[\mathbf{Y}|O =o]) \\
&= U\Big([f(1|o), 1 - f(1|o)]\Big)
\end{align*}
where the third line follows by the symmetry of the two distributions, with 
\[f(1|o) = \frac{0.5f_1(o)}{0.5f_1(o) + 0.5f_2(o)}\] 
On the other hand, the expected posterior over labels in UVC, $\mathbb{E} [\mathbf{Y} \vert x = |o|]$, is just $[0.5, 0.5]$, as by symmetry, given $x = g(o) = |o|$, $o$ is equally likely to come from $f_1$ or $f_2$. So UVC outputs a constant uncertainty score $U([0.5, 0.5])$ for all $x = |o|$, despite the true varying uncertainty scores. 

\textbf{Training DUPs and UVCs on Mixture of Gaussians:} In Table \ref{table-gaussian-mixture} we train DUPs and UVCs on a few different mixture of Gaussian settings. We generate data $o$ from a Gaussian mixture with iid centers, and labels for the data using the posterior over the different centers given $o$. We use these labels to score $o$ on its uncertainty (using $U_{disagree}$). We then train a model on $x = g(o) = |o|$ to predict whether $x$ is low or high uncertainty. (Full details in Appendix.) We see that DUP performs consistently better than UVC.

\subsection{Example on SVHN and CIFAR-10}
\label{sec-svhn-cifar}
Another empirical demonstration is given by training DUP and UVC to predict label agreement in an image blurring setting. For a source image in SVHN or CIFAR-10, we first apply a Gaussian blur, with a variance chosen for that source image. Then, we draw three noisy labels for the source image, where the noise distribution over labels corresponds to the severity of the image blur. For example, for an image that has a Gaussian blur of variance $0$ (i.e. no blurring), the distribution over labels is a point mass on the true label. For an image that has been blurred severely, there is significant mass on incorrect labels. (Exact distributional details are given in the Appendix.) 
\begin{table}
  \centering
  \hskip-1.0cm\begin{tabular}{lll}
    \toprule 
    \textbf{Model} & \textbf{SVHN} (disagree)  & \textbf{CIFAR-10} (disagree)  \\
    \midrule
    UVC &  $75.8\%$ &  $79.1\%$ \\
    DUP &  $\textbf{88.0}\%$ &  $\textbf{85.3}\%$ \\
    \bottomrule
  \end{tabular}
  \caption{\small \textbf{DUP and UVC trained to predict label disagreement corresponding to image blurring on SVHN and CIFAR-10.} DUP outperforms UVC on predicting label disagreement on SVHN and CIFAR-10, where the labels are drawn from a noisy distribution that varies depending on how much blurring the source image has been subjected to. Full details in Appendix.}
  \label{table-image-blur}
  \vspace{-6mm}
\end{table}
We train DUP and UVC models on this dataset and evaluate their ability to predict label disagreement. We again find that DUP models outperfom UVC models. This is despite the setting not directly mapping onto the statement of Theorem \ref{thm-dup-unbiased} -- there is no obscuring function $g$. This suggests the benefits of DUP are more general than the precise theoretical setting. We also observe that the DUP and UVC models learn different features (see Appendix.)

\section{Related Work}
The challenges posed by expert disagreement is an important one, and prior work has put forward several approaches to address some of these issues. Under the assumption that the noise distribution is conditionally independent of the data instance given the true label, \cite{natarajan2013noisy, Sukhbaatar2014Noisy, reed2014noisy, Sheng2008label} provide theoretical analysis along with algorithms to denoise the labels as training progresses, or efficiently collect new labels. However, the conditional independence assumption does not hold in our setting (Figure \ref{fig-data-disagreement}.) Other work relaxes this assumption by defining a domain specific generative model for how noise arises \cite{Mnih_learningto, Xiao2015LearningFM, Veit2017LearningFN} with some methods using additional clean data to pretrain models to form a good prior for learning. Related techniques have also been explored in semantic segmentation \cite{gurari2018predicting, kohl2018probabilistic}. Modeling uncertainty in the context of noisy data has also been looked at through Bayesian techniques \cite{Kendall2017Uncertainties, tanno2017bayesianimagequality}, and (for different models) in the context of crowdsourcing by \cite{Werling2015Learning, Wauthier2011Bayesian}. A related line of work \cite{Dawid79maximumlikelihood, Welinder2010Crowdsourcing} has looked at studying the per labeler error rates, which also requires the additional information of labeler ids, an assumption we relax. Most related is \cite{guan2017labelernoise}, where a multiheaded neural network is used to model different labelers. Surprisingly however, the best model is independent of image features, which is the source of signal in our experiments.

\begin{table*}[t]
  \centering
  \hskip-1.0cm\begin{tabular}{llll}
    \toprule 
    \textbf{Task}  &  & \textbf{Model Type} & \textbf{Performance (AUC)} \\
    \midrule
     Variance Prediction  \hspace{5mm} & UVC & Histogram-E2E   & \hspace{10mm} $70.6\%$   \\
     Variance Prediction  \hspace{5mm} & UVC &  Histogram-PC & \hspace{10mm} $70.6\%$     \\
     Variance Prediction  \hspace{5mm} & DUP &  Variance-E2E & \hspace{10mm} $72.9\%$      \\
     Variance Prediction  \hspace{5mm} & DUP &  Variance-P & \hspace{10mm} $74.4\%$   \\
     Variance Prediction  \hspace{5mm} & DUP &  Variance-PR & \hspace{10mm} $74.6\%$ \\
     Variance Prediction  \hspace{5mm} & DUP &  Variance-PRC & \hspace{10mm} $\textbf{74.8\%}$  \\
     
            \midrule
    Disagreement Prediction \hspace{5mm} & UVC & Histogram-E2E  & \hspace{10mm} $73.4\%$ \\
    Disagreement Prediction \hspace{5mm} & UVC & Histogram-PC & \hspace{10mm} $76.6\%$     \\
    Disagreement Prediction \hspace{5mm} & DUP & Disagree-P & \hspace{10mm} $\textbf{78.1\%}$ \\
    Disagreement Prediction \hspace{5mm} & DUP & Disagree-PC & \hspace{10mm} $\textbf{78.1\%}$ \\
            \midrule
            \midrule
    Variance Prediction   \hspace{5mm} & DUP & Disagree-PC & \hspace{10mm} $73.3\%$    \\
    Disagreement Prediction \hspace{5mm}  & DUP &  Variance-PRC & \hspace{10mm} $77.3\%$ \\ 
    \bottomrule
  \end{tabular}
  \vspace{0mm}
  \caption{\textbf{Performance (percentage AUC) averaged over three runs for UVC and DUPs on Variance Prediction and Disagreement Prediction tasks}. The UVC baselines, which first train a classifier on the empirical grade histogram, are denoted Histogram-. DUPs are trained on either $T_{train}^{(disagree)}$ or $T_{train}^{(var)}$, and denoted Disagree-, Variance- respectively. The top two sets of rows shows the performance of the baseline (and a strengthened baseline Histogram-PC using Prelogit embeddings and Calibration) compared to Variance and Disagree DUPs on the (1) Variance Prediction task (evaluation on $T_{test}^{(var)}$) and (2) Disagreement Prediction task (evaluation on $T_{test}^{(disagree)}$). We see that in both of these settings, the DUPs perform better than the baselines. Additionally, the third set of rows shows tests a Variance DUP on the disagreement task, and vice versa for the Disagreement DUP. We see that both of these also perform better than the baselines.
   \label{table-var-prediction}}
   \vspace{-2mm}
\end{table*}

\section{Doctor Disagreements in DR}
\label{sec-medical-setting}
Our main application studies the effectiveness of Direct Uncertainty Predictors (DUPs) and Uncertainty via Classification (UVC) in identifying patient cases with high disagreements amongst doctors in a large-scale medical imaging setting. These patients stand to benefit most from a medical second opinion.

The task contains patient data in the form of retinal fundus photographs \cite{Gulshan2016Retinal}, large ($587$ x $587$) images of the back of the eye. These photographs can be used to diagnose the patient with different kinds of eye diseases. One such eye disease is Diabetic Retinopathy (DR), which, despite being treatable if caught early enough, remains a leading cause of blindness \cite{ahsan2015dr}.

DR is graded on a $5$-class scale: a grade of $1$ corresponds to \textit{no} DR, $2$ to \textit{mild} DR, $3$ to \textit{moderate} DR, $4$ to \textit{severe} and $5$ to \textit{proliferative} DR \cite{ICDRStandards}. There is an important clinical threshold at grade $3$, with grades $3$ and above corresponding to \textit{referable} DR (needing immediate specialist attention), and $1, 2$ being non-referable. Clinically, the most costly mistake is not referring referable patients, which poses a high risk of blindness.

Our main dataset $T$ has many features typical of medical imaging datasets. $T$ has larger but much fewer images than in natural image datasets such as ImageNet. Each image $x_i$ has a few (typically one to three) individual doctor grades $y_i^{(1)},...,y_i^{(n_i)}$. These grades are also very noisy, with more than $20\%$ of the images  having large (referable/non-referable) disagreement amongst the grades. 

\subsection{Task Setup}
In this section we describe the setup for training variants of DUPs and UVCs using a train test split on $T$. We outline the resulting model performances in Table \ref{table-var-prediction}, which measure how successful the models are in identifying cases where doctors most disagree with each other and consequently where a medical second opinion might be most useful. In Section \ref{sec-adj-evaluation}, we perform a different evaluation (disagreement with consensus) of the best performing DUPs and UVCs on a special, gold standard \textit{adjudicated} test set. In both evaluation settings, we find that DUPs noticeably outperform UVCs.  

The DUP and UVC models are trained and evaluated using a train/test split on $T$, $T_{train}, T_{test}$. This split is constructed using the patient ids of the $x_i \in T$, with $20\%$ of patient ids being set aside to form $T_{test}$ and $80\%$ to form $T_{train}$ (of which $10\%$ is sometimes used as a validation set.) Splitting by patient ids is important to ensure that multiple images $x_i, x_j \in T$ corresponding to a single patient are correctly split \cite{Gulshan2016Retinal}. 

We apply $U_{disagree}(\cdot)$ to the $x_i$ in $T_{train}, T_{test}$ with more than one doctor label to form a new train/test split $T_{train}^{(disagree)}, T_{test}^{(disagree)}$. We repeat this with $U_{var}(\cdot)$ to also form a train/test split $T_{train}^{(var)}, T_{test}^{(var)}$. These two datasets capture the two different medical interpretations of DR grades:

\textit{Categorical Grade Interpretation}: The DR grades can be interpreted as categorical classes, as each grade has specific features associated with it. A grade of $2$ always means microaneurysms, while a grade of $5$ can refer to lesions or laser scars \cite{ICDRStandards}. The $T_{train}^{(disagree)}, T_{test}^{(disagree)}$ data measures disagreement in this categorical setting.

\textit{Continuous Grade Interpretation}: While there are specific features associated with each DR grade, patient conditions tend to progress sequentially through the different DR grades. The $T_{train}^{(var)}, T_{test}^{(var)}$ data thus accounts for the magnitude of differences in doctor grades.

Having formed $T_{train}^{(disagree)}, T_{test}^{(disagree)}$ and $T_{train}^{(var)}, T_{test}^{(var)}$, which consist of pairs $(x_i, U_{disagree}(\mathbf{\hat{p}_i}))$ and $(x_i, U_{var}(\mathbf{\hat{p}_i}))$ respectively, we binarize the uncertainty scores $U_{disagree}(\mathbf{\hat{p}_i}), U_{var}(\mathbf{\hat{p}_i})$ into $0$ (low uncertainty) or $1$ (high uncertainty) to form our final prediction targets. We denote these $U_{disagree}^B(\mathbf{\hat{p}_i}), U_{var}^B(\mathbf{\hat{p}_i})$. More details on this can be found in Appendix Section \ref{sec-uncertainty-details}.


\subsection{Models and First Experimental Results}
\label{sec-first-experiments}
We train both UVCs and DUPs on this data. All models rely on an Inception-v3 base that, following prior work \cite{Gulshan2016Retinal}, is initialized with pretrained weights on ImageNet. The UVC is performed by first training a classifier $h_c$ on $(x_i, \mathbf{\hat{p}_i})$ pairs in $T_{train}$. The output probability distribution of the classifier, $\tilde{\mathbbm{p}}_i = h_c(x_i)$ is then used as input to the uncertainty scoring function $U(\cdot)$, i.e. $h_{uvc}(x_i) = U \circ h_c(x_i)$ In contrast, the DUPs are trained directly on the pairs $(x_i, U_{disagree}^B(\mathbf{\hat{p}_i})), (x_i, U_{var}^B(\mathbf{\hat{p}_i}))$, i.e. $h_{dup}(x_i)$ directly tries to learn the value of $U^B(\mathbf{\hat{p}_i})$

The results of evaluating these models (on $T_{test}^{(disagree)}$ and $T_{var}^{(disagree)}$) are given in Table \ref{table-var-prediction}. The \textit{Variance Prediction} task corresponds to evaluation on $T_{var}^{(disagree)}$, and the \textit{Disagreement Prediction} task to evaluation on $T_{test}^{(disagree)}$. Both tasks correspond to identifying patients where there is high disagreement amongst doctors. As is typical in medical applications due to class imbalances, performance is given via area under the ROC curve (AUC) \cite{Gulshan2016Retinal, rajpurkar2017chexnet}. 

From the first two sets of rows, we see that DUP models perform better than their UVC counterparts on both tasks. The third set of rows shows the effect of using a variance DUP (Variance-PRC) on the disagreement task and a disagree DUP (Disagree-PC) on the variance task. While these don't perform as well as the best DUP models on their respective tasks, they still beat both the baseline and the strengthened baseline. Below we describe some of the different UVC and DUP variants, with more details in Appendix Section \ref{sec-uncertainty-details}.

\textbf{UVC Models} The UVC models are trained on (image, empirical grade histogram) $(x_i, \hat{\mathbf{p}}_i)$ pairs, and denoted \textit{Histogram-} in Table \ref{table-var-prediction}. The simplest UVC is \textit{Histogram-E2E}, the same model used in \cite{Gulshan2016Retinal}. We improved this baseline by instead taking the prelogit embeddings of Histogram-E2E, and training a small neural network (fully connected, two hidden layers width $300$) with temperature scaling (as in \cite{guo2017calibration}) only on $x_i$ with multiple labels. This gave the strengthened baseline \textit{Histogram-PC}.
\begin{figure}
  \centering
 \includegraphics[width=1.0\columnwidth]{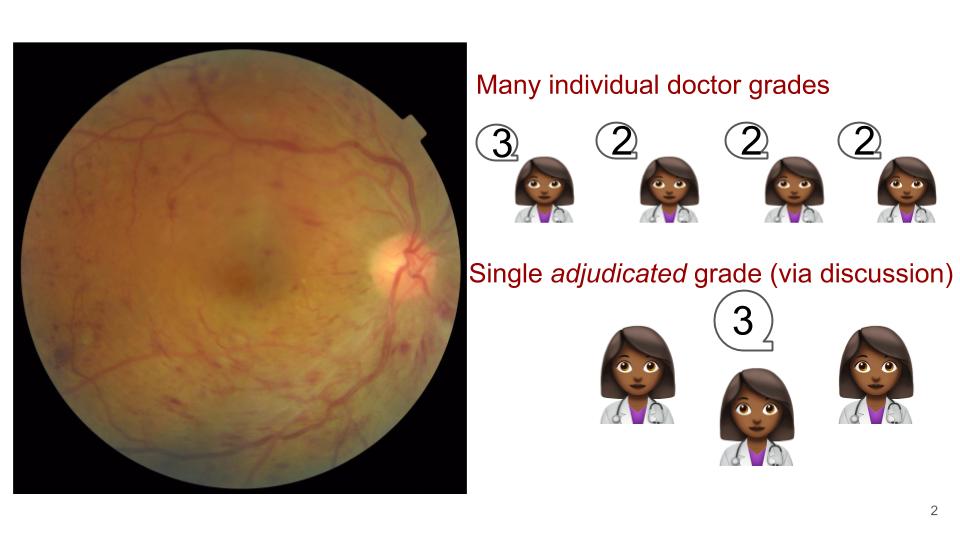}
 \vspace{-3mm}
  \caption{\small \textbf{Labels for the adjudicated dataset $A$.} The small, gold standard adjudicated dataset $A$ has a very different label structure to the main dataset $T$. Each image has many individual doctor grades (typically more than $10$ grades). These doctors also tend to be specialists, with higher rates of agreement. Additionally, each image has a single \textit{adjudicated} grade, where three doctors first grade the image individually, and then come together to discuss the diagnosis and finally give a single, \textit{consensus diagnosis}.}
  \label{fig-adj-process}
  \vspace{-4.5mm}
\end{figure}

\textbf{DUP Variance Models} The simplest Variance DUP is \textit{Variance-E2E}, which is analogous to Histogram-E2E, except trained on $T_{train}^{(var)}$. This performed better than Histogram-E2E, but as $T_{train}^{(var)}$ is small for an Inception-v3, we trained a small neural network (fully connected, two hidden layers width $300$) on the prelogit embeddings, called \textit{Variance-P}. Small variants of \textit{Variance-P} (details in Appendix Section \ref{sec-uncertainty-details}) give \textit{Variance-PR}, and \textit{Variance-PRC}.

\textbf{DUP Disagreement Models} Informed by the variance models, the \textit{Disagree-P} model was designed exactly like the \textit{Variance-P} model (a small fully connected network on prelogit embeddings), but trained on $T^{(disagree)}_{train}$. A small variant of this with calibration gave \textit{Disagree-PC}.

In the Appendix, we demonstrate similar results using entropy as the uncertainty function, as well as experiments studying convergence speed and finite sample behaviour of DUP and UVC. We find that the performance gap between DUP and UVC is robust to train data size, and manifests early in training.

\begin{table*}[t]
  \centering
  \hskip-1.0cm\begin{tabular}{lllllll}
    \toprule 
    & \textbf{Model Type} & \textbf{Majority} & \textbf{Median} & \textbf{Majority $=1$} & \textbf{Median $=1$} & \textbf{Referable} \\
    \midrule
    UVC &  Histogram-E2E-Var & \hspace{3mm} $78.1\%$ & \hspace{3mm} $78.2\%$ & \hspace{3mm} $81.3\%$ & \hspace{3mm} $78.1\%$  & \hspace{3mm} $85.5\%$  \\
    UVC &  Histogram-E2E-Disagree & \hspace{3mm} $78.5\%$ & \hspace{3mm} $78.5\%$ & \hspace{3mm} $80.5\%$ & \hspace{3mm} $77.0\%$  & \hspace{3mm} $84.2\%$   \\
    UVC &  Histogram-PC-Var & \hspace{3mm} $77.9\%$ & \hspace{3mm} $78.0\%$ & \hspace{3mm} $80.2\%$ & \hspace{3mm} $77.7\%$  & \hspace{3mm} $85.0\%$  \\
    UVC &  Histogram-PC-Disagree & \hspace{3mm} $79.0\%$ & \hspace{3mm} $78.9\%$ & \hspace{3mm} $80.8\%$ & \hspace{3mm} $79.2\%$ & \hspace{3mm} $84.8\%$ \\
    DUP & Variance-PR & \hspace{3mm} $80.0\%$ & \hspace{3mm} $79.9\%$ & \hspace{3mm} $83.1\%$ & \hspace{3mm} $80.5\%$  & \hspace{3mm} $85.9\%$  \\
    DUP & Variance-PRC & \hspace{3mm} $79.8\%$ & \hspace{3mm} $79.7\%$ & \hspace{3mm} $82.7\%$ & \hspace{3mm} $80.2\%$ & \hspace{3mm} $85.9\%$  \\
    DUP & Disagree-P & \hspace{3mm} $\textbf{81.0\%}$ & \hspace{3mm} $80.8\%$ & \hspace{3mm} $\textbf{84.6\%}$ & \hspace{3mm} $\textbf{81.9\%}$ & \hspace{3mm} $\textbf{86.2\%}$   \\
    DUP & Disagree-PC & \hspace{3mm} $80.9\%$ & \hspace{3mm} $\textbf{80.9\%}$ & \hspace{3mm} $84.5\%$ & \hspace{3mm} $81.8\%$ & \hspace{3mm} $\textbf{86.2\%}$  \\
    \bottomrule
  \end{tabular}
  \caption{\small \textbf{Evaluating models (percentage AUC) on predicting disagreement between an average individual doctor grade and the adjudicated grade.} We evaluate our models's performance using multiple different aggregation metrics (majority, median, binarized non-referable/referable median) as well as special cases of interest (no DR according to majority, no DR according to median). We observe that \textbf{all} direct uncertainty models (Variance-, Disagree-) outperform \textit{all} classifier-based models (Histogram-) on \textit{all} tasks.}
  \label{table-adj-agreement}
  \vspace{-3mm}
\end{table*}

\section{Predicting Disagreement with Consensus: Adjudicated Evaluation}
\label{sec-adj-evaluation}
Section \ref{sec-medical-setting} trained DUPs and UVCs on $T_{train}$, and evaluated them on their ability to identify patient cases where individual doctors were most likely to disagree with each other. Here, we take these trained DUPs/UVCs, and perform an \textit{adjudicated} evaluation, to satisfy two additional goals. 

Firstly, and most importantly, the clinical question of interest is not only in identifying patients where individual doctors disagree with each other, but cases where a more thorough diagnosis -- the \textit{best possible} doctor grade -- would disagree significantly with the \textit{individual} doctor grade. Evaluation on a gold-standard \textit{adjudicated} dataset $A$ enables us to test for this: each image $x_i \in A$ not only has many individual doctor grades (by specialists in the disease) but also a single \textit{adjudicated} grade. This grade is determined by a group of doctors seeking to reach a \textit{consensus} on the diagnosis through discussion \cite{krause2018adj}. Figure \ref{fig-adj-process} illustrates the setup.

We can thus evaluate on this question by seeing if high model uncertainty scores correspond to disagreements between the (average) individual doctor grade and the adjudicated grade. More specifically, we compute different aggregations of the individual doctor grades for $x_i \in A$, and give a binary label for whether this aggregation agrees with the adjudicated grade ($0$ for agreement, $1$ for disagreement). We then see if our model uncertainty scores is predictive of the binary label. 

Secondly, our evaluation on $A$ also provides a more accurate reflection of our models's performance, with less confounding noise. The labels in $A$ (both individual and adjudicated) are much cleaner, with greater consistency amongst doctors. As $A$ is used \textit{solely} for evaluation (all evaluated models are trained on $T_{train}$, Section \ref{sec-medical-setting}), this introduces a distribution shift, but the predicted uncertainty scores transfer well.
\begin{table*}[t]
  \centering
  \begin{tabular}{lllll}
    \toprule 
    & \textbf{Prediction Type} & \textbf{Absolute Val} & \textbf{2-Wasserstein} & \textbf{Binary Disagree}  \\
    \midrule
    UVC & Histogram-E2E-Var & \hspace{3mm} $0.650$ & \hspace{3mm} $0.644$ & \hspace{3mm} $0.643$  \\
    UVC & Histogram-E2E-Disagree & \hspace{3mm} $0.645$ & \hspace{3mm} $0.633$ & \hspace{3mm} $0.643$   \\
    UVC & Histogram-PC-Var & \hspace{3mm} $0.638$ & \hspace{3mm} $0.639$ & \hspace{3mm} $0.619$   \\
    UVC & Histogram-PC-Disagree & \hspace{3mm} $0.660$ & \hspace{3mm} $0.655$ & \hspace{3mm} $0.649$  \\
    DUP & Variance-PR & \hspace{3mm} $0.671$ & \hspace{3mm} $0.664$ & \hspace{3mm} $0.660$  \\
    DUP & Variance-PRC & \hspace{3mm} $0.665$ & \hspace{3mm} $0.658$ & \hspace{3mm} $0.656$  \\
    DUP & Disagree-P & \hspace{3mm} $\textbf{0.682}$ & \hspace{3mm} $\textbf{0.670}$ & \hspace{3mm} $\textbf{0.676}$ \\
    DUP & Disagree-PC & \hspace{3mm} $0.680$ & \hspace{3mm} $0.669$ & \hspace{3mm} $0.675$  \\
      \midrule
    &  $2$ Doctors & \hspace{3mm} $0.460$ & \hspace{3mm} $0.448$ & \hspace{3mm} $0.455$  \\
    &  $3$ Doctors & \hspace{3mm} $0.585$ & \hspace{3mm} $0.576$ & \hspace{3mm} $0.580$  \\
    &  $4$ Doctors & \hspace{3mm} $0.641$ & \hspace{3mm} $0.634$ & \hspace{3mm} $0.644$  \\
    &  $5$ Doctors & \hspace{3mm} $0.676$ & \hspace{3mm} $0.670$ & \hspace{3mm} $0.675$  \\
    &  $6$ Doctors & \hspace{3mm} $0.728$ & \hspace{3mm} $0.712$ & \hspace{3mm} $0.718$  \\
    \bottomrule
  \end{tabular}
  \vspace{2mm}
  \caption{\small \textbf{Ranking evaluation of models uncertainty scores using Spearman's rank correlation.} In the top set of rows, we compare the ranking induced by the model uncertainty scores to the (ground truth) ranking induced by the Wasserstein distance between the empirical grade histogram and the adjudicated grade. We use three different metrics for evaluating Wasserstein distance: absolute value distance, 2-Wasserstein and Binary agree/disagree (more details in Appendix \ref{sec-app-wasserstein}.) Again, we see that \textit{all} DUPs outperform \textit{all} baselines on \textit{all} metrics. The second set of rows provides another way to interpret these results. We subsample $n$ doctors to create a new subsampled empirical grade histogram, and compare the ranking induced by the Wasserstein distance between this and the adjudicated grade to the ground truth ranking. We can thus say that the average DUP ranking corresponds to having $5$ doctor grades, and the average UVC ranking corresponds to $4$ doctor grades.  
  \label{table-adj-wasserstein}
   }
   \vspace{-4mm}
\end{table*}
The results are shown in Table \ref{table-adj-agreement}. We evaluate on several different aggregations of individual doctor grades. Like \cite{Gulshan2016Retinal}, we compare agreement between the majority vote of the individual grades and the adjudicated grades. To compensate for a bias of individual doctors giving lower DR grades \cite{krause2018adj}, we also look at agreement between the median individual grade and adjudicated grade. Additionally, we look at referable/non-referable DR grade agreement. We binarize both the individual doctor grades and the adjudicated grade into $0/1$ non-referable/referable, and check agreement between the median binarized grades and adjudicated grade. Finally, we also look at the special case where the average doctor grade is $1$ (no DR), and compare agreement with the adjudicated grade.

We evaluate both baseline models (Histogram-E2E, Histogram-PC) as well as the best performing DUPs, (Variance-PR, Variance-PRC, Disagree-P, Disagree-PC.) The additional -Var, -Disagree suffixes on the baseline models indicate which uncertainty function ($U_{var}$ or $U_{disagree}$) was used to postprocess the classifier output distribution $\tilde{\mathbbm{p}}$ to get an uncertainty score. We find that \textit{all} DUPs outperform \textit{all} the baselines on \textit{all} evaluations.

\subsection{Ranking Evaluation}
A frequent practical challenge in healthcare is to \textit{rank} cases in order of hardest (needing most attention) to easiest (needing least attention), \cite{harrell1984regression}. Therefore, we evaluate how well our models can rank cases from greatest disagreement between the adjudicated and individual grades to least disagreement between the adjudicated and individual grades. To do this however, we need a \textit{continuous} ground truth value reflecting this disagreement, instead of the binary $0/1$ agree/disagree used above. One natural way to do this is to compute the \textit{Wasserstein} distance between the empirical histogram (individual grade distribution) and the adjudicated grade.

At a high level, the Wasserstein distance computes the minimal cost required to move a probability distribution $\mathbf{p}^{(1)}$ to a probability distribution $\mathbf{p}^{(2)}$ with respect to a given metric $d(\cdot)$. In our setting, $\mathbf{p}^{(1)}$ is the empirical histogram $\hat{\mathbf{p}}_i$ of $x_i$, and $\mathbf{p}^{(2)}$ is the point mass at the adjudicated grade $a_i$. When one distribution is a point mass, the Wasserstein distance has a simple interpretation:
\begin{theorem}
Let $\mathbf{p}^{(1)}$ and $\mathbf{p}^{(2)}$ be two probability distributions, with $\mathbf{p}^{(2)}$ a point mass with non-zero value $a$. Let $d(\cdot)$ be a given metric. The Wasserstein distance between $\mathbf{p}^{(1)}$ and $\mathbf{p}^{(2)}$, $|| \mathbf{p}^{(1)} - \mathbf{p}^{(2)}||_{w}$ with respect to $d(\cdot)$ can be written as
\[ || \mathbf{p}^{(1)} - \mathbf{p}^{(2)}||_{w} = \mathbb{E}_{C \sim \mathbf{p}^{(1)}} [ d(C, a)] \]
\end{theorem}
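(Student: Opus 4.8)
The plan is to work directly from the Kantorovich (coupling) definition of the Wasserstein distance and exploit the rigidity imposed by one marginal being a point mass. Recall that for two distributions $\mathbf{p}^{(1)}, \mathbf{p}^{(2)}$ the Wasserstein distance with respect to $d(\cdot)$ is the infimum, over all couplings $\gamma$ (joint distributions with first marginal $\mathbf{p}^{(1)}$ and second marginal $\mathbf{p}^{(2)}$), of the expected transport cost $\mathbb{E}_{(X,Y)\sim\gamma}[d(X,Y)]$. So the first step is simply to write this definition out explicitly and identify the feasible set of couplings we are optimizing over.

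The key observation is that when $\mathbf{p}^{(2)} = \delta_a$ is a point mass at $a$, this feasible set collapses to a single element. Any admissible coupling $\gamma$ must have its second marginal equal to $\delta_a$, which forces $\gamma$ to place all of its mass on pairs of the form $(c, a)$ with $c$ in the support of $\mathbf{p}^{(1)}$; no mass can sit at any $y \neq a$. Combined with the requirement that the first marginal equal $\mathbf{p}^{(1)}$, this pins down $\gamma$ uniquely as the product measure $\mathbf{p}^{(1)} \otimes \delta_a$, i.e. the measure assigning mass $\mathbf{p}^{(1)}(c)$ to each pair $(c, a)$. I would verify directly that this candidate is a valid coupling by checking both marginals.

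With the coupling set reduced to a singleton, the infimum in the definition is attained trivially by this unique $\gamma$, so the Wasserstein distance equals the transport cost under it:
\[ || \mathbf{p}^{(1)} - \mathbf{p}^{(2)} ||_{w} = \mathbb{E}_{(C, a) \sim \gamma}[d(C, a)] = \mathbb{E}_{C \sim \mathbf{p}^{(1)}}[d(C, a)], \]
which is exactly the claimed identity (the hypothesis that $a$ is non-zero plays no role in the argument and merely reflects the grading convention in the application).

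The one step requiring genuine care is the uniqueness claim, i.e. arguing rigorously that the second-marginal constraint leaves no freedom in $\gamma$. The cleanest justification is that for any measurable set $B$ of $y$-values with $a \notin B$, the second-marginal condition gives $\gamma(\text{anything} \times B) = \delta_a(B) = 0$, so $\gamma$ assigns zero mass to every pair whose $y$-coordinate lies in $B$; hence $\gamma$ is supported entirely on $\{y = a\}$, and the first-marginal condition then fixes the distribution of the $x$-coordinate to be $\mathbf{p}^{(1)}$. Everything after this is direct substitution, so I anticipate no further obstacles.
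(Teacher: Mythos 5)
Your proof is correct and follows essentially the same route as the paper's: both work from the Kantorovich coupling definition and observe that the point-mass second marginal forces the unique coupling $\mathbf{p}^{(1)} \otimes \delta_a$, after which the infimum is attained trivially. Your treatment of the uniqueness step (via the second-marginal constraint on sets avoiding $a$) is, if anything, slightly more careful than the paper's one-line verification, and your aside that the hypothesis ``non-zero value $a$'' is never used is accurate.
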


The proof is in Appendix \ref{sec-app-wasserstein}. In our setting, the theorem says that the (continuous) disagreement score for $x_i \in A$ is just the expected distance between a grade drawn from the empirical histogram and the adjudicated grade. We consider three different distance functions $d(\cdot)$: (a) the absolute value of the grade difference, (b) the $2-Wasserstein$ distance, a metrization of the squared distance penalizing large grade differences more (details in Appendix \ref{sec-app-wasserstein}) and (c) a $0/1$ binary agree/disagree metric, in line with the categorical and continous interpretations of DR grades, Section \ref{sec-medical-setting}.

We compare the ranking induced by this continuous disagreement score on $A$ with the ranking induced by the model's predicted uncertainty scores. To evaluate the similarity of these rankings, we use Spearman's rank correlation \cite{spearman1904}, which takes a value between $[-1, 1]$. A $-1$ indicates perfect negative rank correlation, $1$ a perfect positive rank correlation and $0$ no correlation. The results are shown in Table \ref{table-adj-wasserstein}. Similar to Table \ref{table-adj-agreement}, we observe strong performance with DUPs: all DUPs beat all the baselines on all the different distances. 

This task also enables a natural comparison between the models and doctors. In particular, we can compute a third ranking over $A$, by sampling $n$ individual doctor grades, and computing the Wasserstein distance between this subsampled empirical histogram and the adjudicated grade. This experiment tells us how many doctor grades are needed to give a ranking as accurate as the models. For DUPs, we need on average $5$ doctors, while for the UVC baseline, we need on average $4$ doctors.

\section{Discussion}
In this paper, we show that machine learning models can successfully be used to predict data instances that give rise to high expert disagreement. The main motivation for this prediction problem is the medical domain, where some patient cases result in significant differences in doctor diagnoses, and may benefit greatly from a medical second opinion. We show, both with a formal result and through extensive experiments, that Direct Uncertainty Prediction, which learns an uncertainty score directly from the raw patient features, performs significantly better than Uncertainty via Classification. Future work might look at transferring these techniques to different data modalities, and extending the applications to machine learning data denoising.

\clearpage
\subsubsection*{Acknowledgments}
We thank Varun Gulshan and Arunachalam Narayanaswamy for detailed advice and discussion on the model, and David Sontag and Olga Russakovsky for specific suggestions. We also thank Quoc Le, Martin Wattenberg, Jonathan Krause, Lily Peng and Dale Webster for general feedback. We thank Naama Hammel and Zahra Rastegar for helpful medical insights. Robert Kleinberg's work was partially supported by NSF grant CCF-1512964.

\bibliographystyle{icml2019}
\bibliography{refs}

\clearpage

\onecolumn
\appendix
\section*{Appendix}

\section{Proofs of Direct Uncertainty Prediction Results}
We first prove Theorem \ref{thm-dup-unbiased}. 

\begin{proof}
To show unbiasedness of $h_{dup}$, we need to show that $\mathbb{E}[h_{dup}] = \mathbb{E}[U(\mathbf{Y})]$. But from the tower law (law of total expectation):
\[ \mathbb{E}[h_{dup}] = \mathbb{E}\Big[\mathbb{E}\big[U(\mathbb{E}[\mathbf{Y}|O]) \big| g(O)\big]\Big] = \mathbb{E}[U(\mathbb{E}[\mathbf{Y}|O])] \]
To prove the biasedness of $h_{uvc}$, first note that
\[ h_{uvc} = U(\mathbb{E}[\mathbf{Y}|g(O)]) = U(\mathbb{E}[\mathbb{E}[\mathbf{Y}|O]|g(O)]) \]
by the conditional independence of $Y, g(O)$ given $O$.
Next, by the fact that $U(\cdot)$ is concave and Jensen's inequality, 
\[ h_{uvc} = U(\mathbb{E}[\mathbb{E}[\mathbf{Y}|O]|g(O)]) \geq \mathbb{E}[U(\mathbb{E}[\mathbf{Y}|O] | g(O)] = h_{dup} \]
This is a strict inequality whenever the distribution of posteriors induced by conditioning on $g(O)$ is not a point-mass. Therefore we have that $h_{uvc}$ overestimates the the true uncertainty $U(\mathbf{Y})$.
\end{proof}

For specific $U(\cdot)$, we can compute the bias term by first computing $h_{uvc} - h_{dup}$ and then taking an expectation. For $U_{disagree}$, we have
\[ h_{uvc} = U(\mathbb{E}[\mathbf{Y}|g(O)]) = 1 - \sum_l \mathbb{E}[\mathbb{E}[Y_l |O] | g(O)]^2 \]
and 
\[ h_{dup} = \mathbb{E}[U(\mathbb{E}[\mathbf{Y}|O]) | g(O)] = 1 - \sum_l \mathbb{E}[\mathbb{E}[Y_l |O]^2 | g(O)] \]
And so, 
\[ h_{uvc} - h_{dup} = \sum_l \mathbb{E}[\mathbb{E}[Y_l |O]^2 | g(O)] - \sum_l \mathbb{E}[\mathbb{E}[Y_l |O] | g(O)]^2 \]
But this is just
\[ Var\left(\sum_l \mathbb{E}[\mathbb{E}[Y_l |O]|g(O)]\right) \]
Taking expectations over values of $g(O)$ gives the bias, i.e.
\[ \mathbb{E}\left[ Var\left(\sum_l \mathbb{E}[\mathbb{E}[Y_l |O]|g(O)]\right) \right] \]

For $U_{var}$, we have
\[ h_{uvc} = \sum_l l^2 \mathbb{E}[\mathbb{E}[Y_l |O] | g(O)] - \left(\sum_l l \mathbb{E}[\mathbb{E}[Y_l |O] | g(O)] \right)^2 \]
and 
\[ h_{dup} = \mathbb{E}\left[\left. \sum_l l^2 \mathbb{E}[Y_l |O] - \left(\sum_l l \mathbb{E}[Y_l |O] \right)^2 \right\vert g(O)\right] \]
And so $h_{uvc} - h_{dup}$ becomes
\[ \mathbb{E}\left[ \left. \left(\sum_l l \mathbb{E}[Y_l |O] \right)^2 \right\vert g(O)\right] - \left(\sum_l l \mathbb{E}[\mathbb{E}[Y_l |O] | g(O)] \right)^2 \]
Which is just
\[ Var\left(\left. \sum_l l\cdot \mathbb{E}[Y_l |O] \right\vert g(O)\right) \]
Taking expectations over values of $g(O)$ like before gives the result.

\section{Mixture of Gaussians Setting}
We train a DUP and UVC (Figure \ref{table-gaussian-mixture}) on a synthetic task where data is generated from a mixture of Gaussians. All of our settings have uniform mixtures of Gaussians, with the Gaussian mean vectors being drawn from $\mathcal{N}(0, 1/d)$ ($d$ corresponding to the dimension) so that in expectation, each mean vector has norm $1$. The variance is set to be the identity. Like before, we set $x = g(o) = |o|$. We draw five labels for each $x$ from the posterior distribution over Gaussian centers given $x$, and apply $U_{disagree}()$ to the empirical histogram. As with the medical imaging application, we threshold these uncertainty scores (with threshold $0.5$) to give a binary low uncertainty/high uncertainty label, which we use to train our DUPs and UVCs. Results are given in percentage AUC to account for some settings having unbalanced classes.

We train fully connected networks with two hidden layers of width $300$ on this task, using the SGD with momentum optimizer and an initial learning rate of $0.01$. 

\section{SVHN and CIFAR-10 Setting}
In Section \ref{sec-svhn-cifar}, we train DUP and UVC models to predict label disagreement on a synthetic task on SVHN and CIFAR-10. The task setup is as follows: for each image in SVHN/CIFAR-10, we decide on a variance ($0, 1, 2, 3$) for a Gaussian filter that is applied to the image. Three labels are then drawn for the image from a noisy distribution over labels, with the label noise distribution depending on the variance of the Gaussian filter. Specifically, for a Gaussian filter with variance $0$, the noise distribution is just a point mass on the true label. For a Gaussian filter with variance $1$, the three labels are drawn from a distribution with $0.02$ mass on four incorrect labels, and the remaining $0.92$ mass on the correct label. For variance equal to $2$, the labels are drawn from a distribution with $0.08$ mass on four different labels, and remaining mass on the true label. For variance $3$, this mass is now $0.12$ on the incorrect labels.

A simple conv network, with $3$x$3$ kernels and channels $64--128--256$, followed by fully connected layers of width $1000$ and $200$ (each with batch normalization) is trained on this dataset, with the UVC model trained on the empirical histogram, and the DUP model trained on a binary agree/disagree target. (Disagreement threshold is if at least one label disagrees.) We find that DUP outperforms UVC on both SVHN and CIFAR-10.

\textbf{Learned Features} Interestingly, we also observe that the features learned by the DUP and UVC models are different to each other. We apply saliency maps, specifically SmoothGrad \cite{smilkov2017smoothgrad} and IntGrad \cite{sundararajan2017axiomatic} to study the features that DUP and UVC pay attention to in the input image. 

\begin{figure}
\centering
\begin{tabular}{ccccc}
\hspace*{-10mm} \includegraphics[width=0.35\columnwidth]{./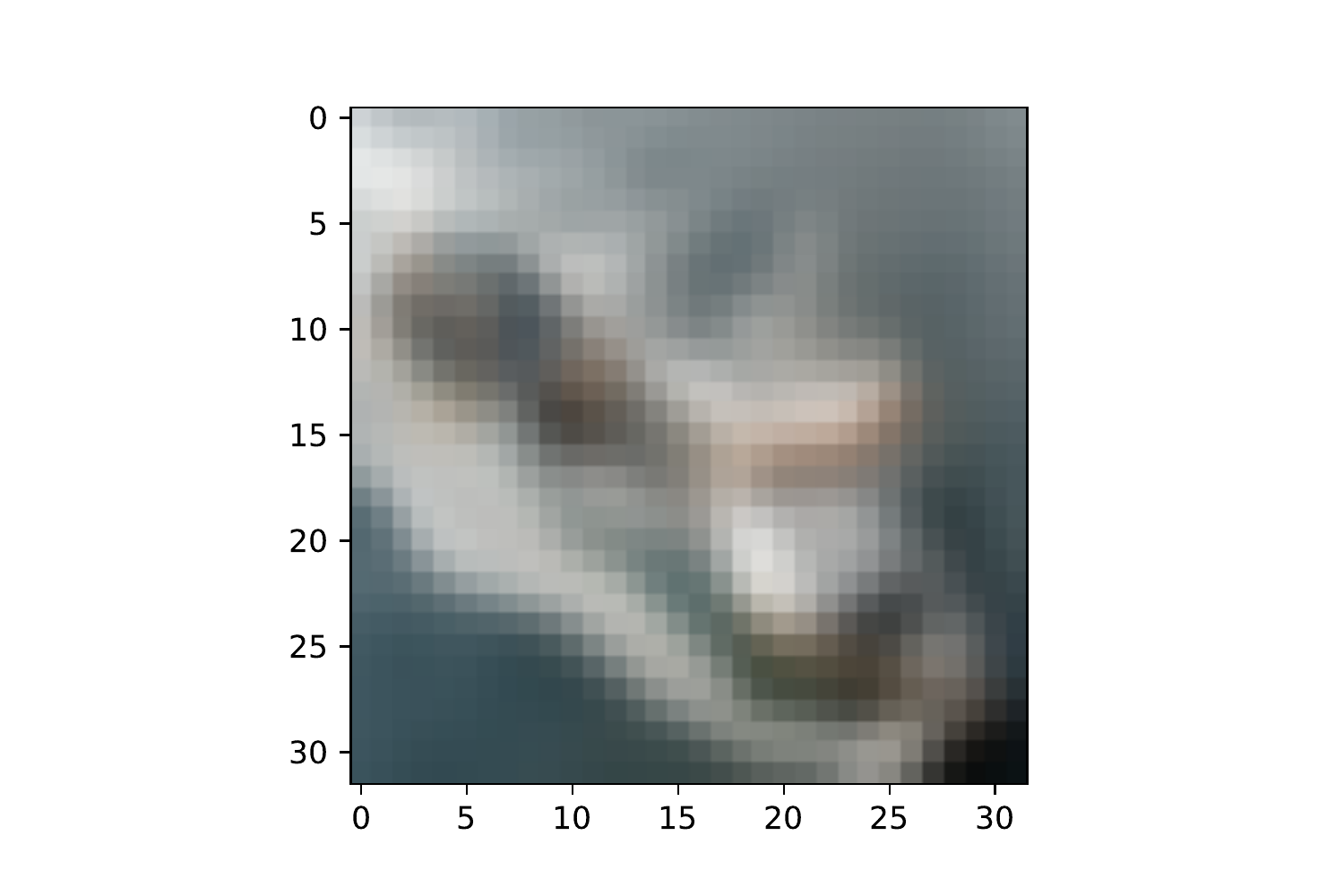} &
\hspace*{-10mm} \includegraphics[width=0.2\columnwidth]{./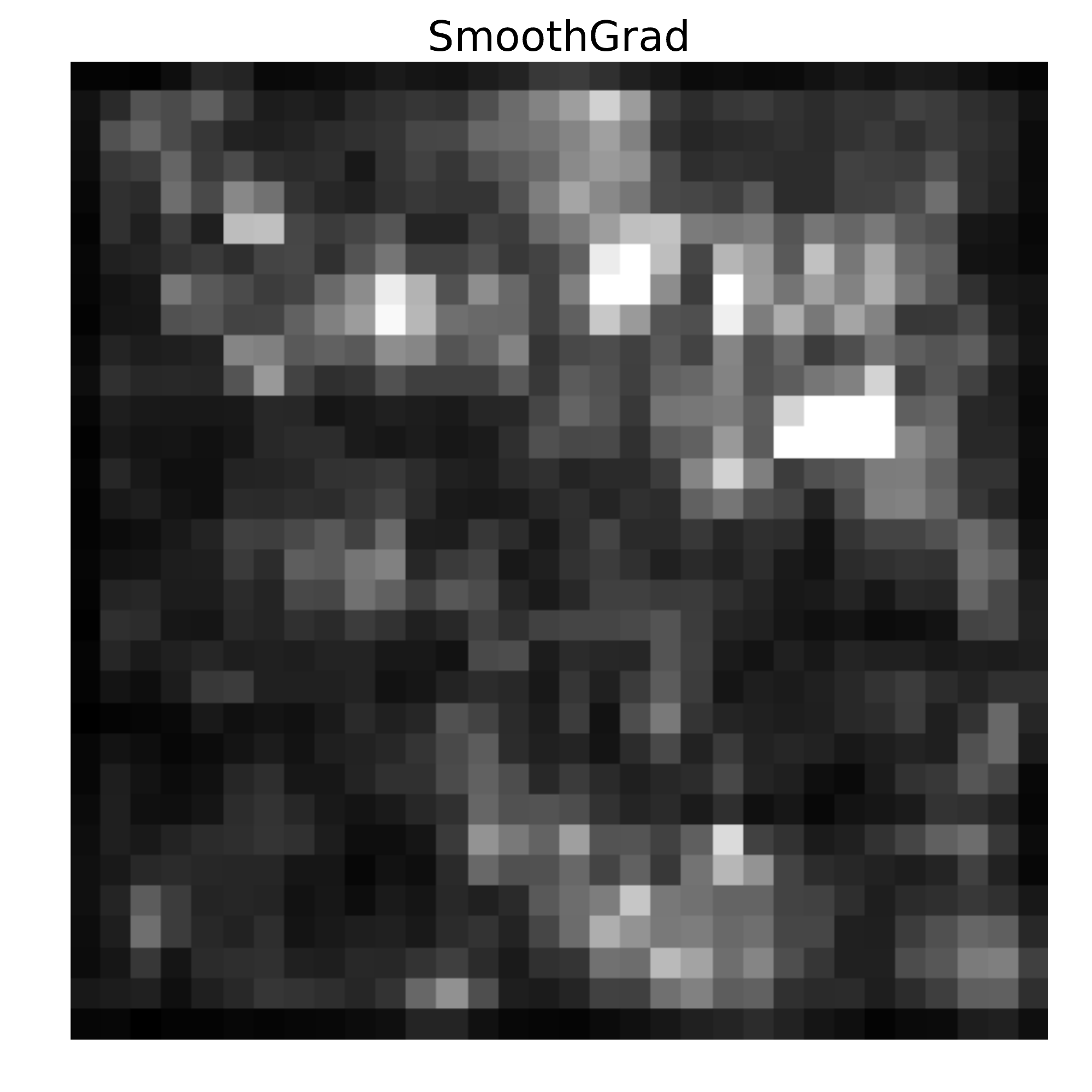} &
\hspace*{-5mm} \includegraphics[width=0.2\columnwidth]{./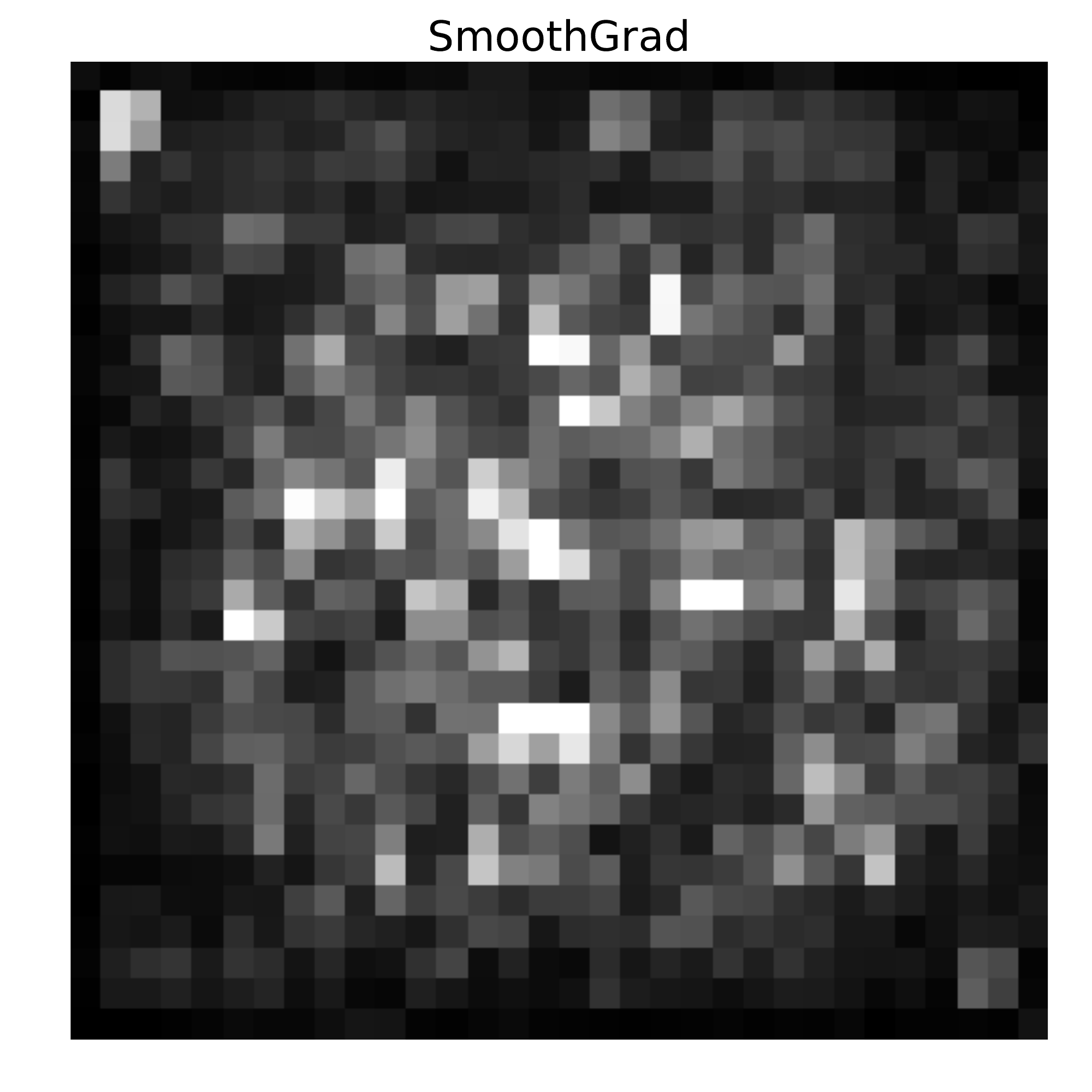} &
\hspace*{-5mm} \includegraphics[width=0.2\columnwidth]{./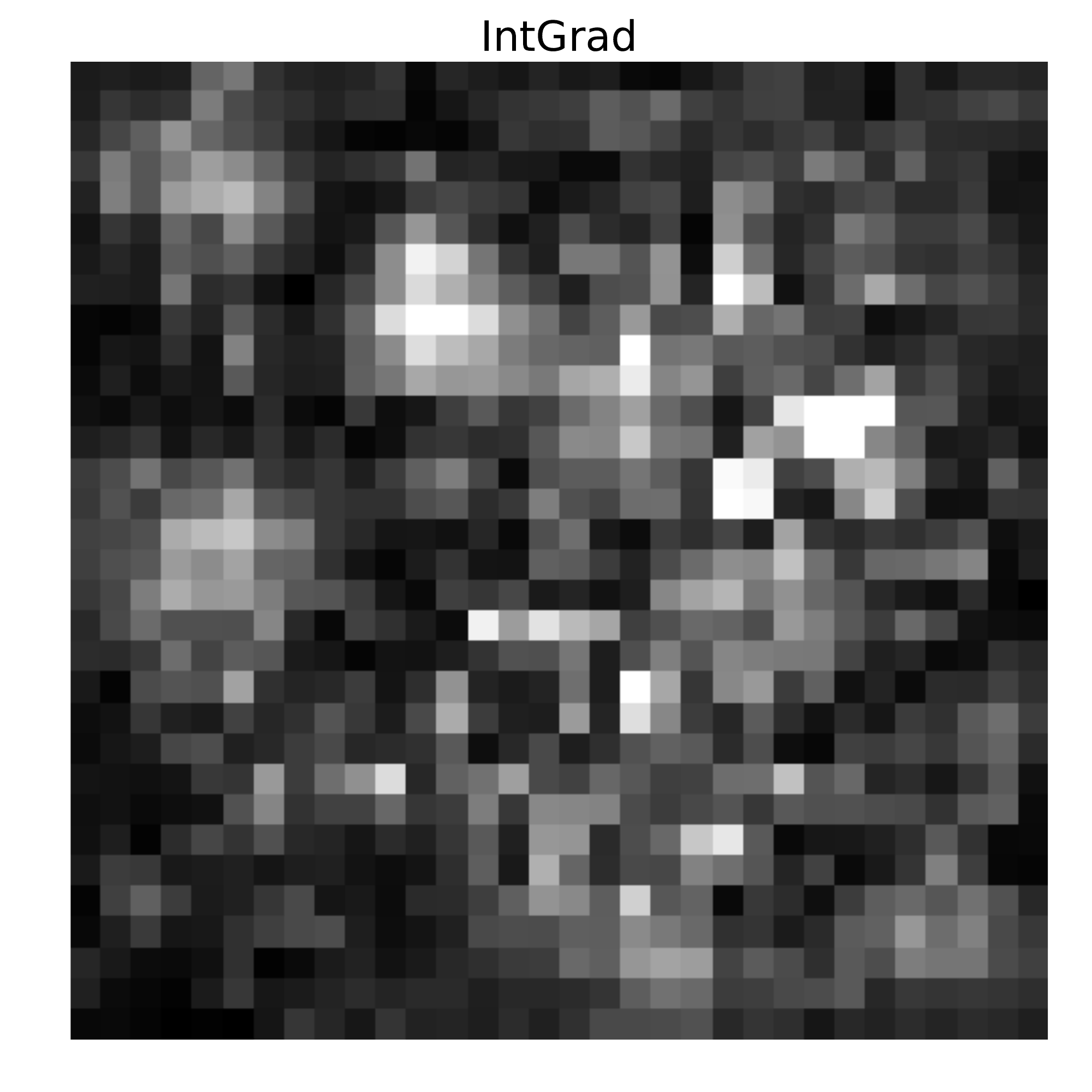} &
\hspace*{-5mm} \includegraphics[width=0.2\columnwidth]{./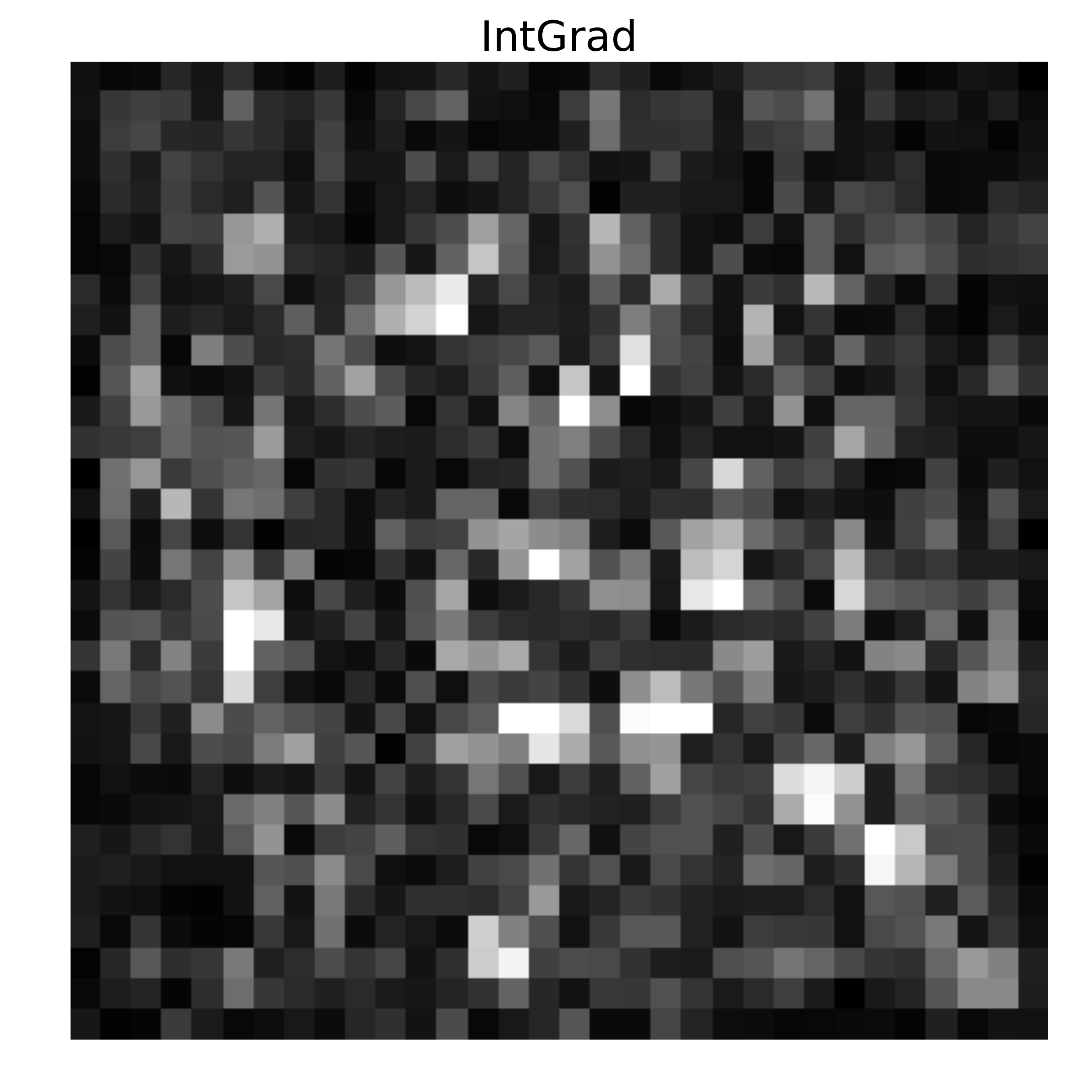} \\

\hspace*{-10mm} \includegraphics[width=0.35\columnwidth]{./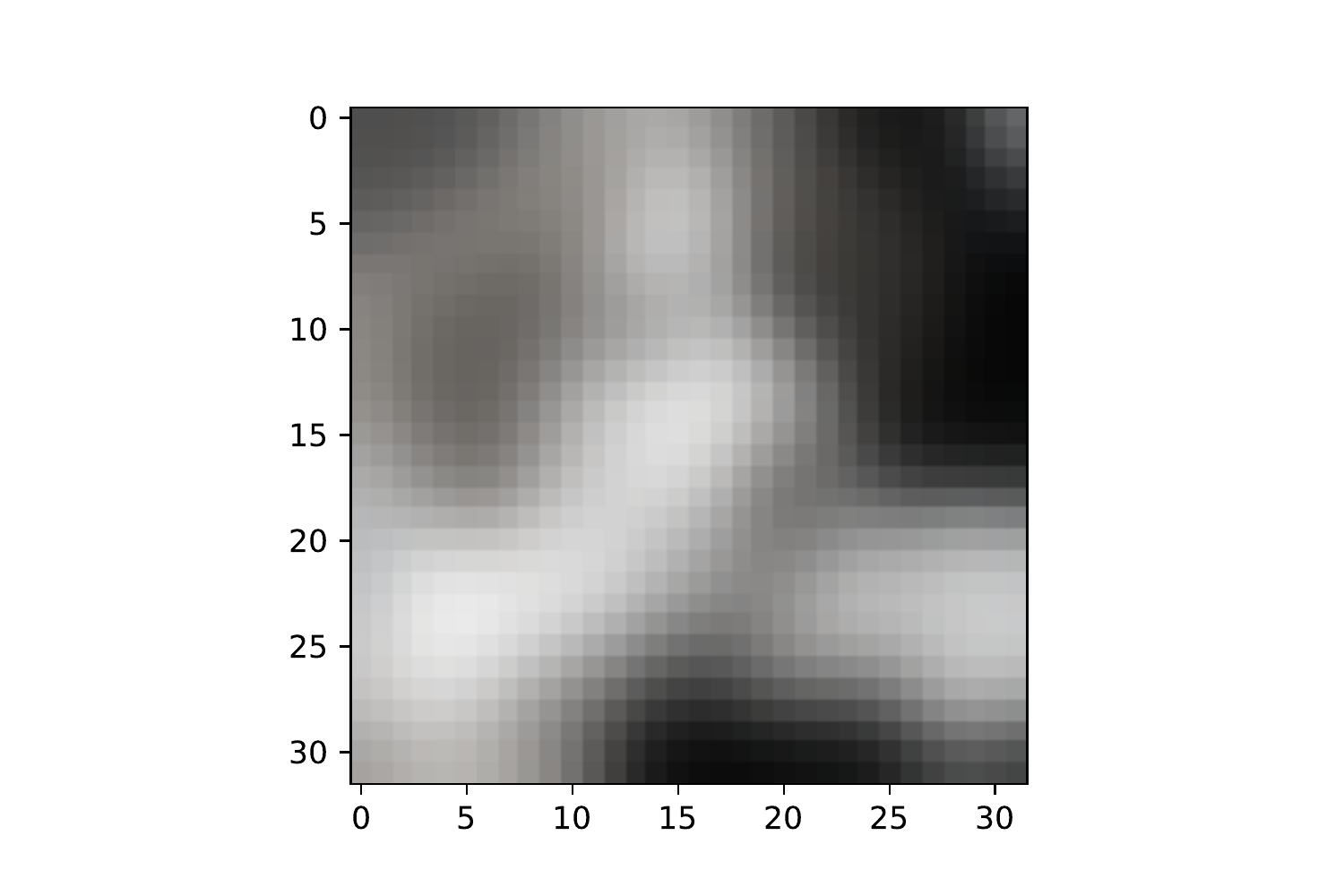} &
\hspace*{-10mm} \includegraphics[width=0.2\columnwidth]{./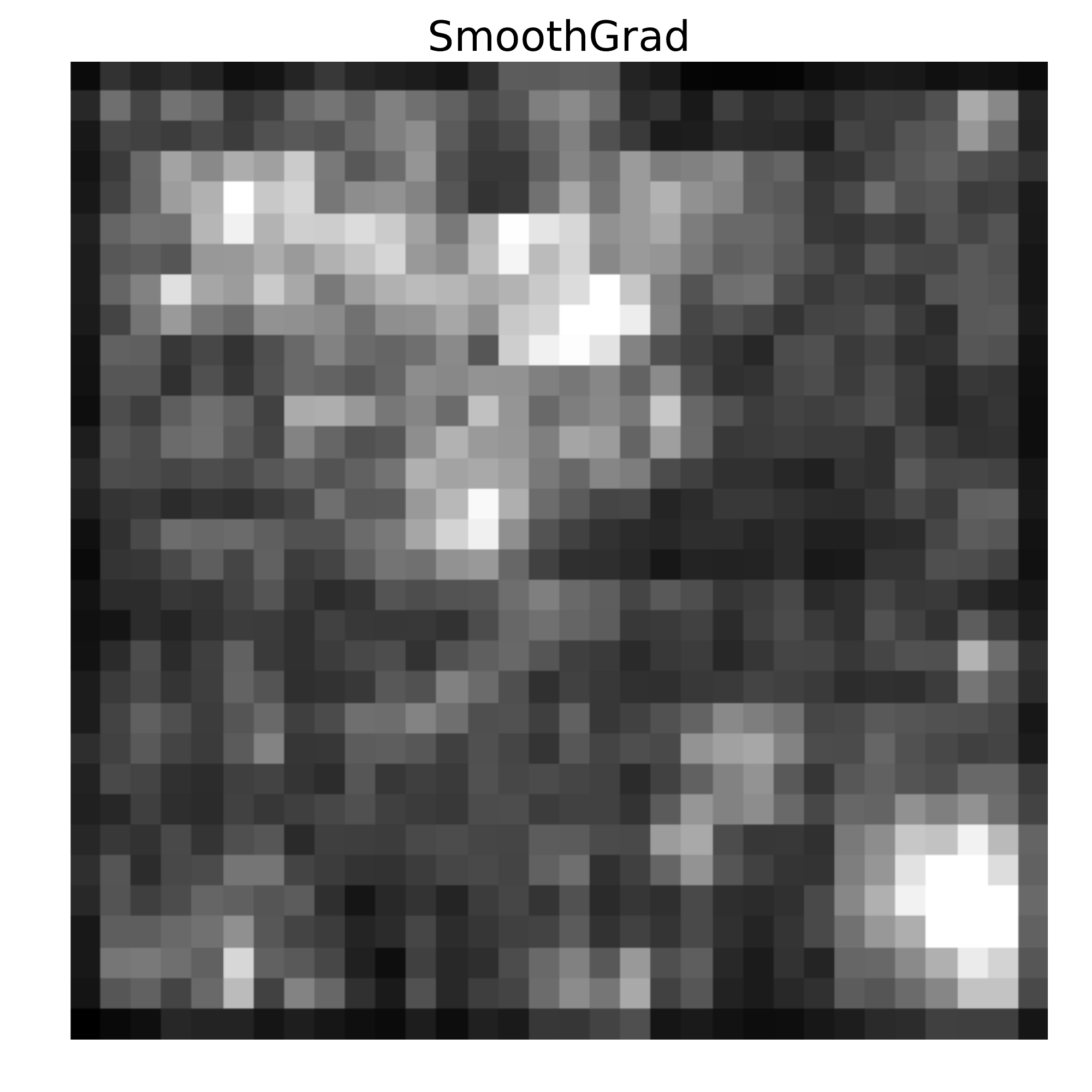} &
\hspace*{-5mm} \includegraphics[width=0.2\columnwidth]{./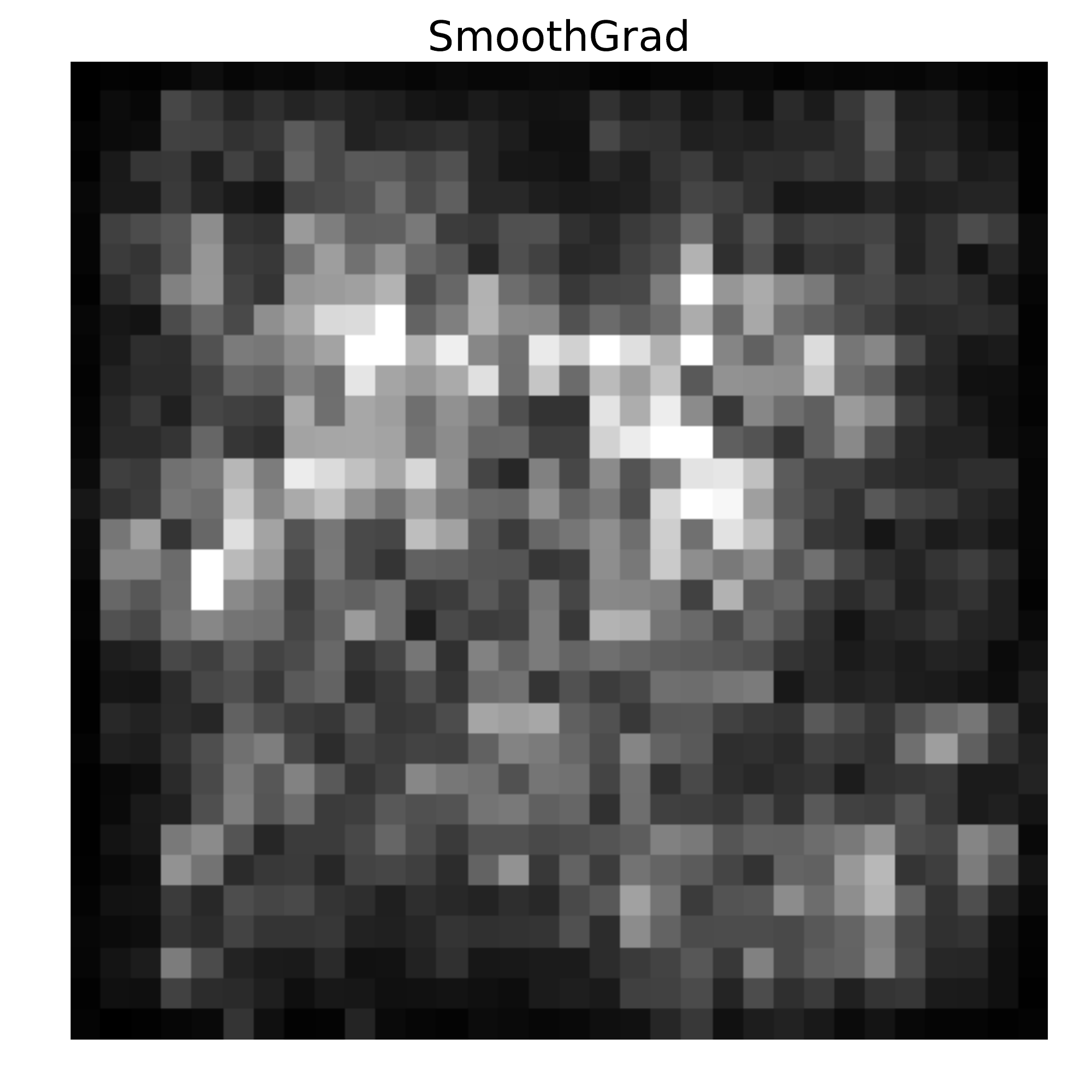} &
\hspace*{-5mm} \includegraphics[width=0.2\columnwidth]{./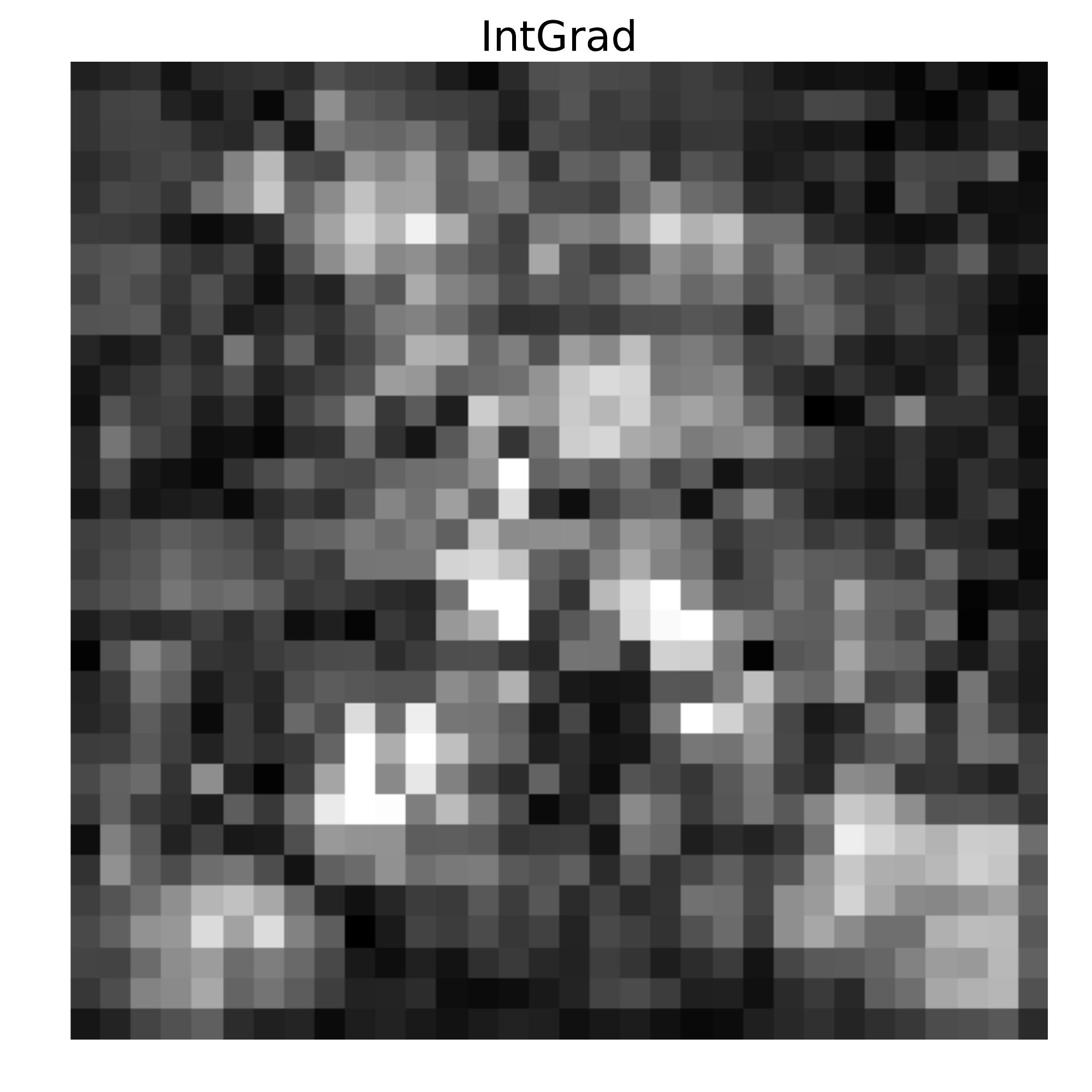} &
\hspace*{-5mm} \includegraphics[width=0.2\columnwidth]{./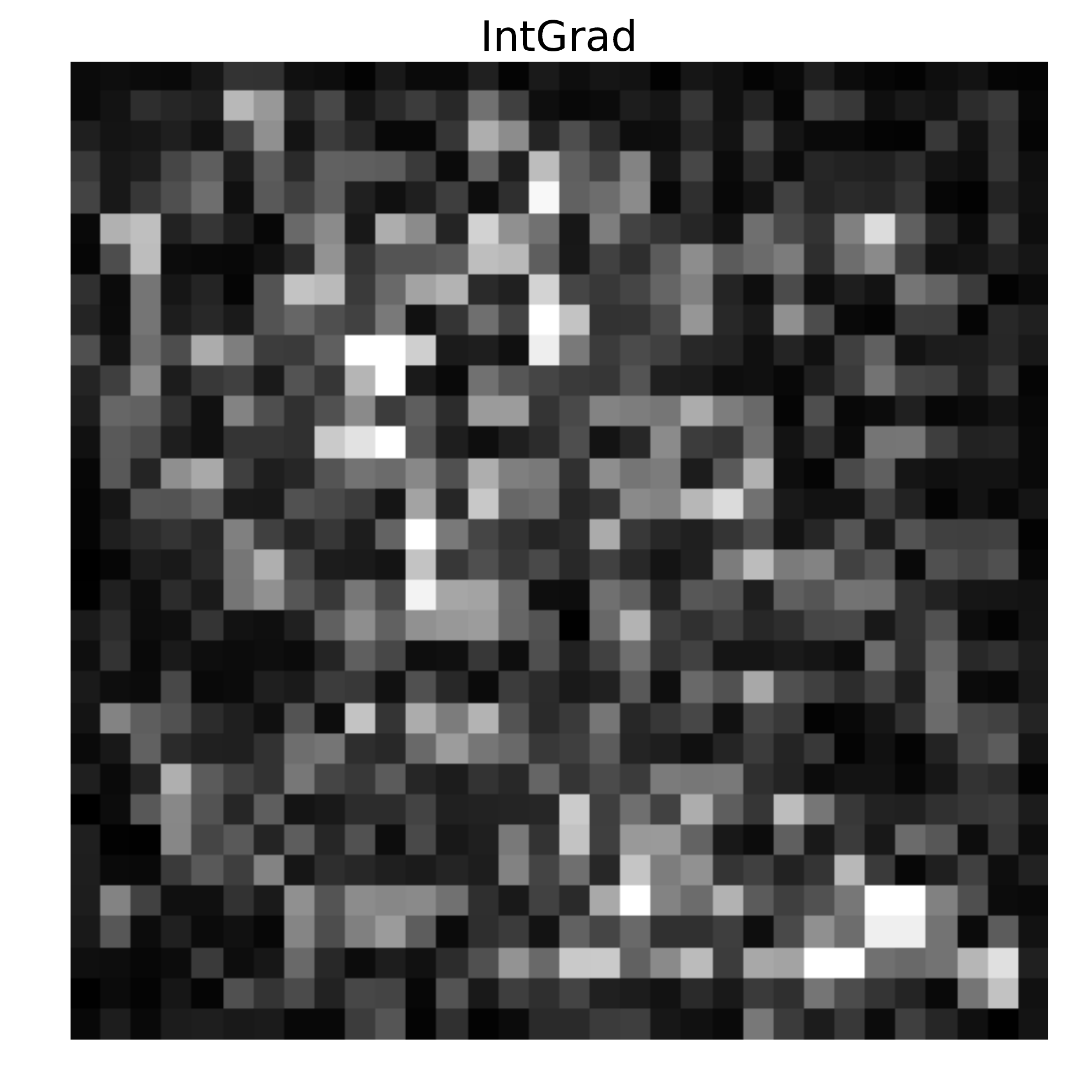} \\

\hspace*{-10mm} \includegraphics[width=0.35\columnwidth]{./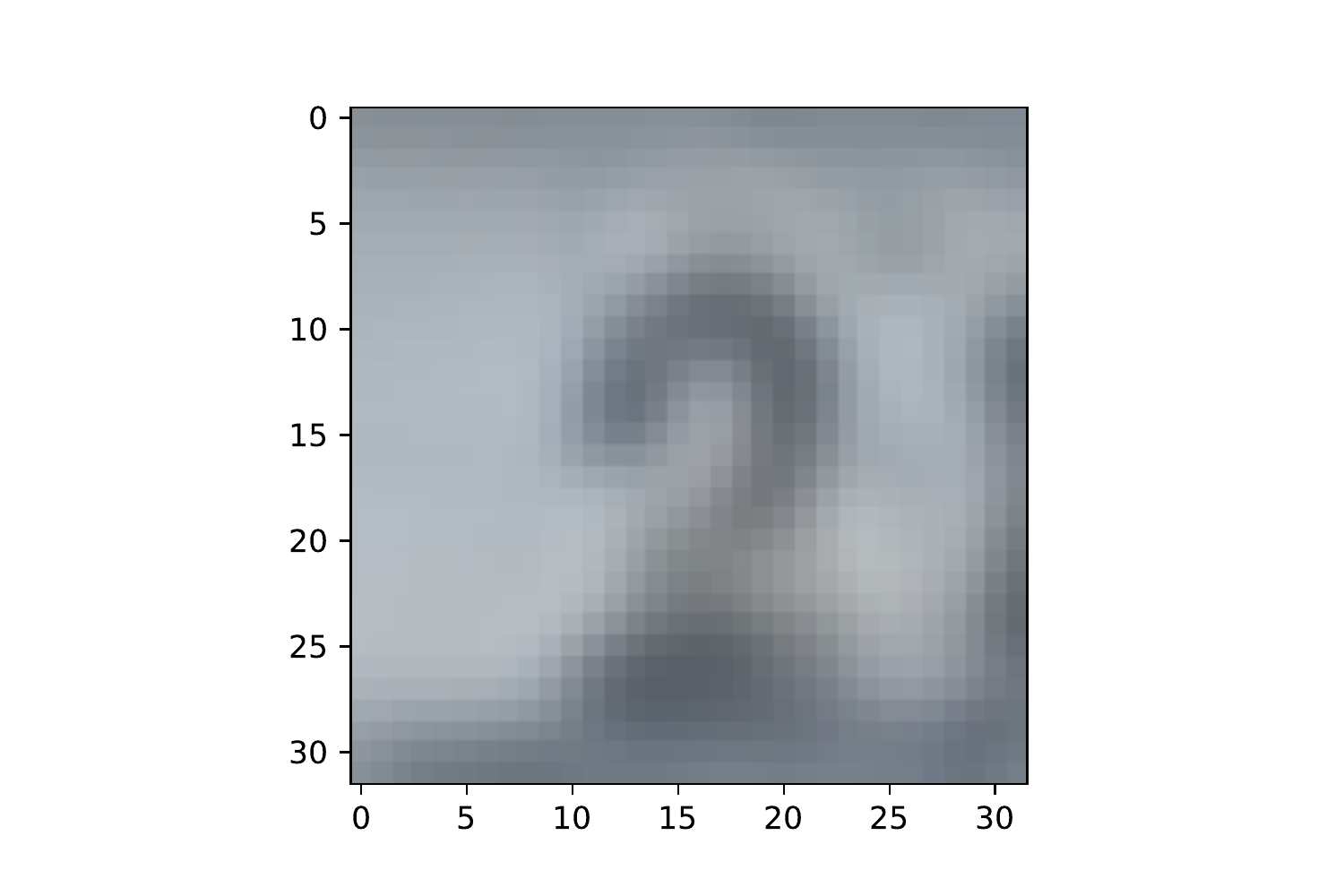} &
\hspace*{-10mm} \includegraphics[width=0.2\columnwidth]{./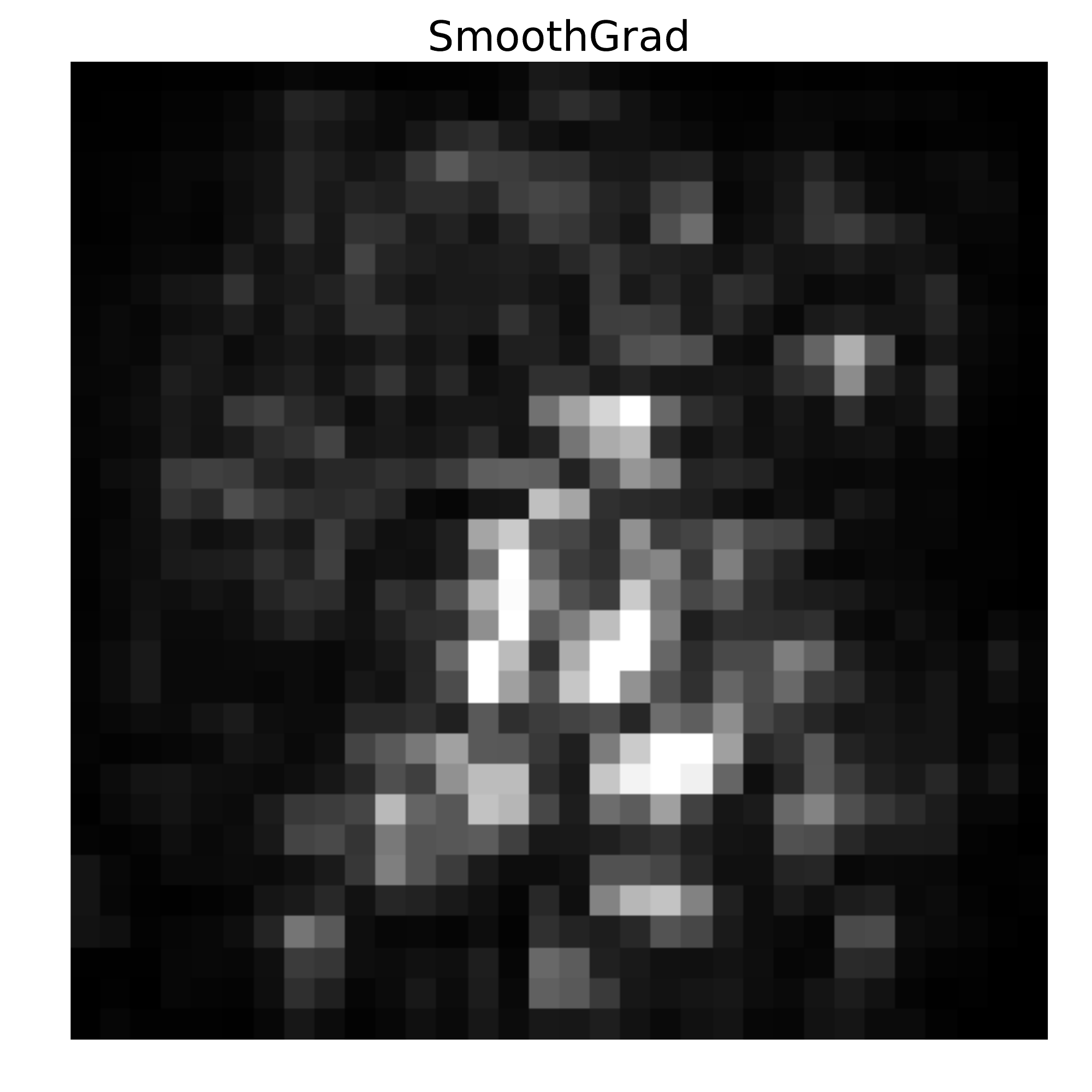} &
\hspace*{-5mm} \includegraphics[width=0.2\columnwidth]{./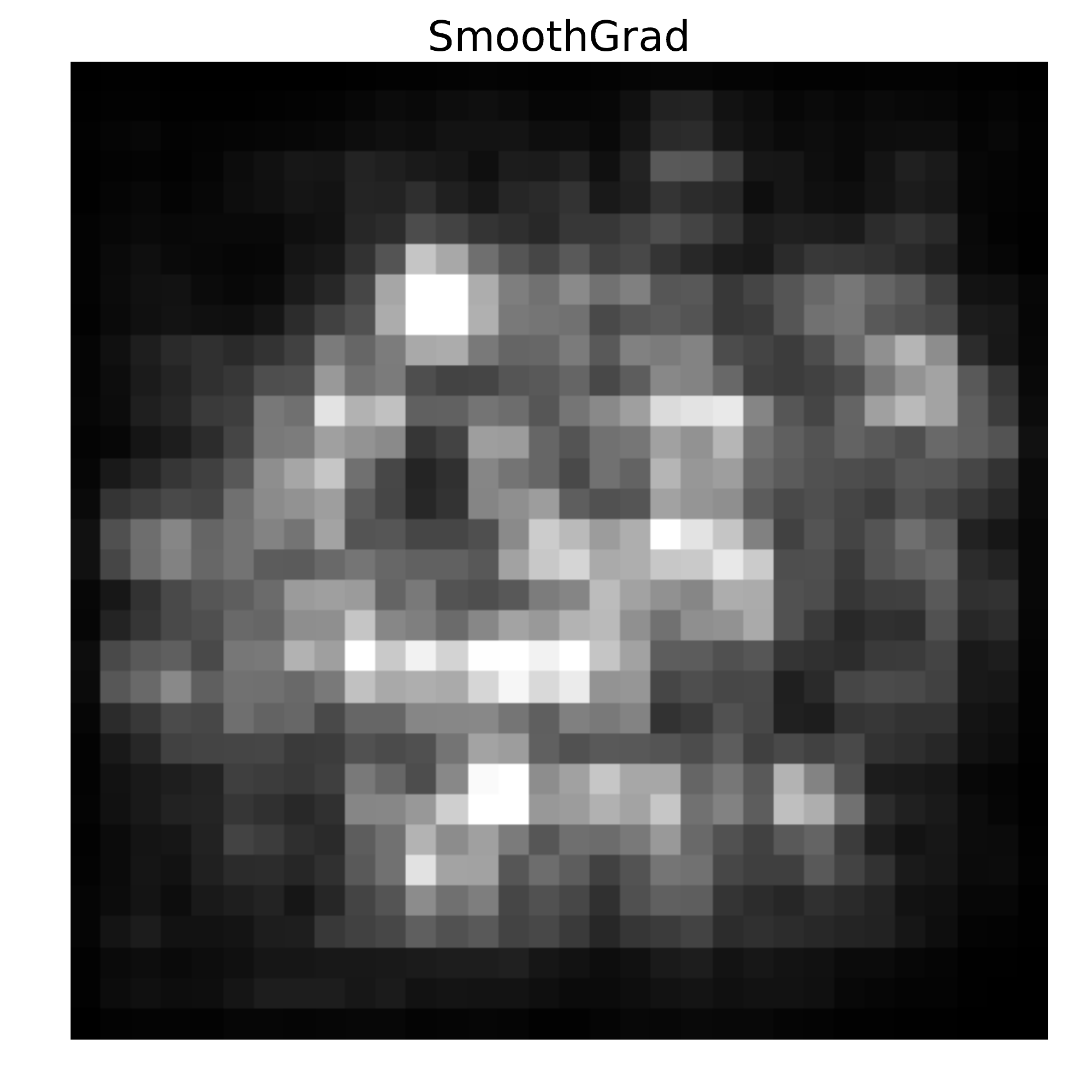} &
\hspace*{-5mm} \includegraphics[width=0.2\columnwidth]{./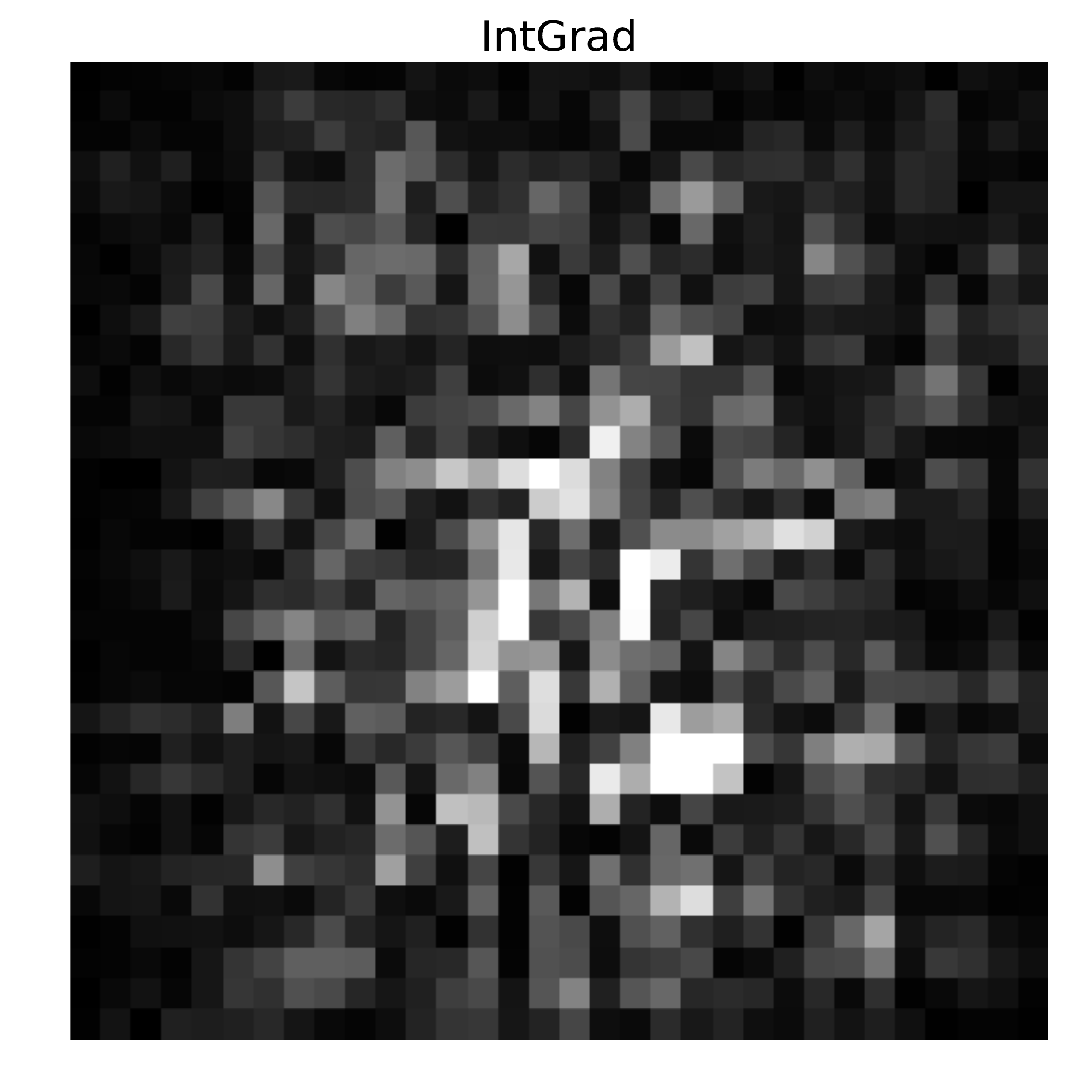} &
\hspace*{-5mm} \includegraphics[width=0.2\columnwidth]{./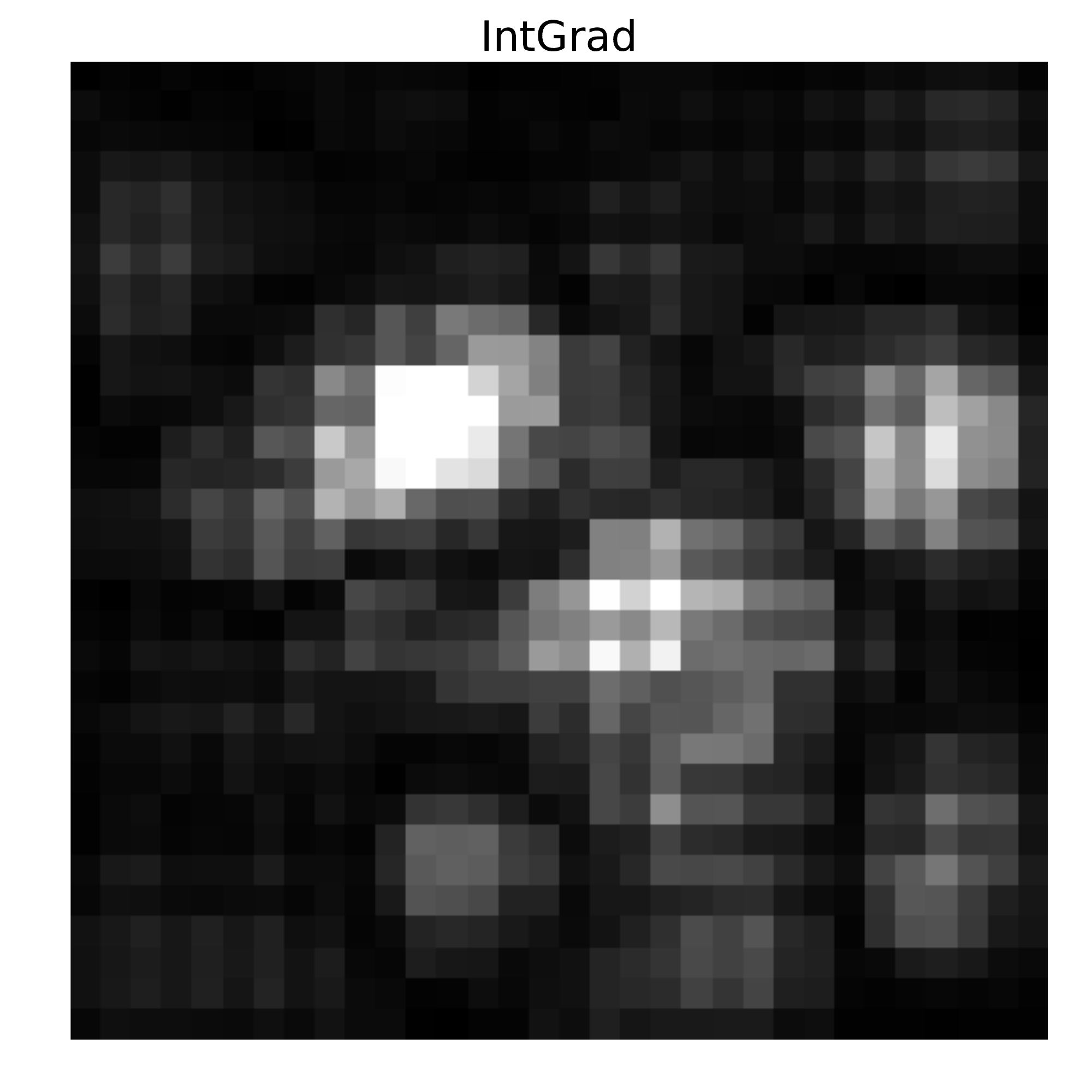} \\

\hspace*{-10mm} \includegraphics[width=0.35\columnwidth]{./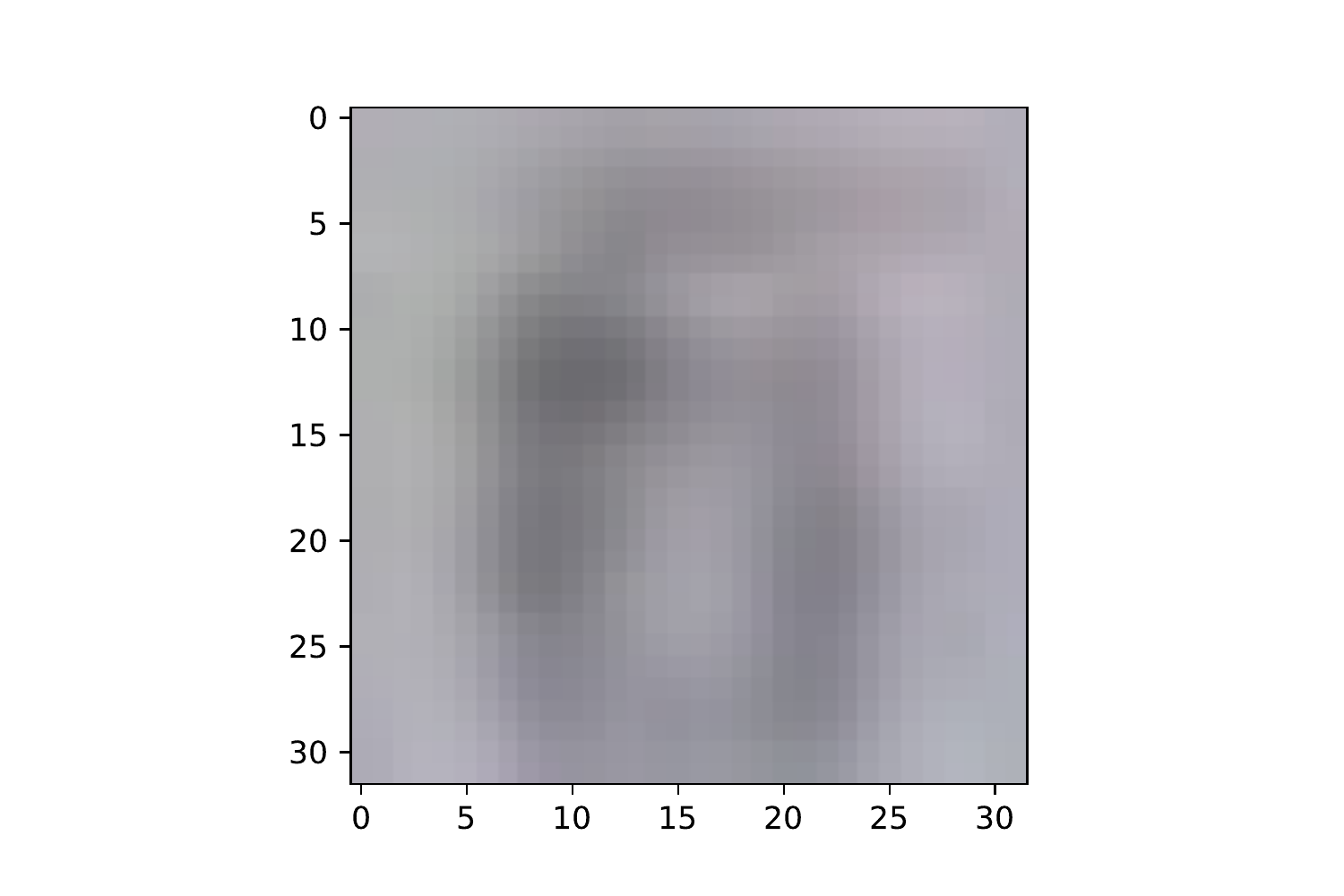} &
\hspace*{-10mm} \includegraphics[width=0.2\columnwidth]{./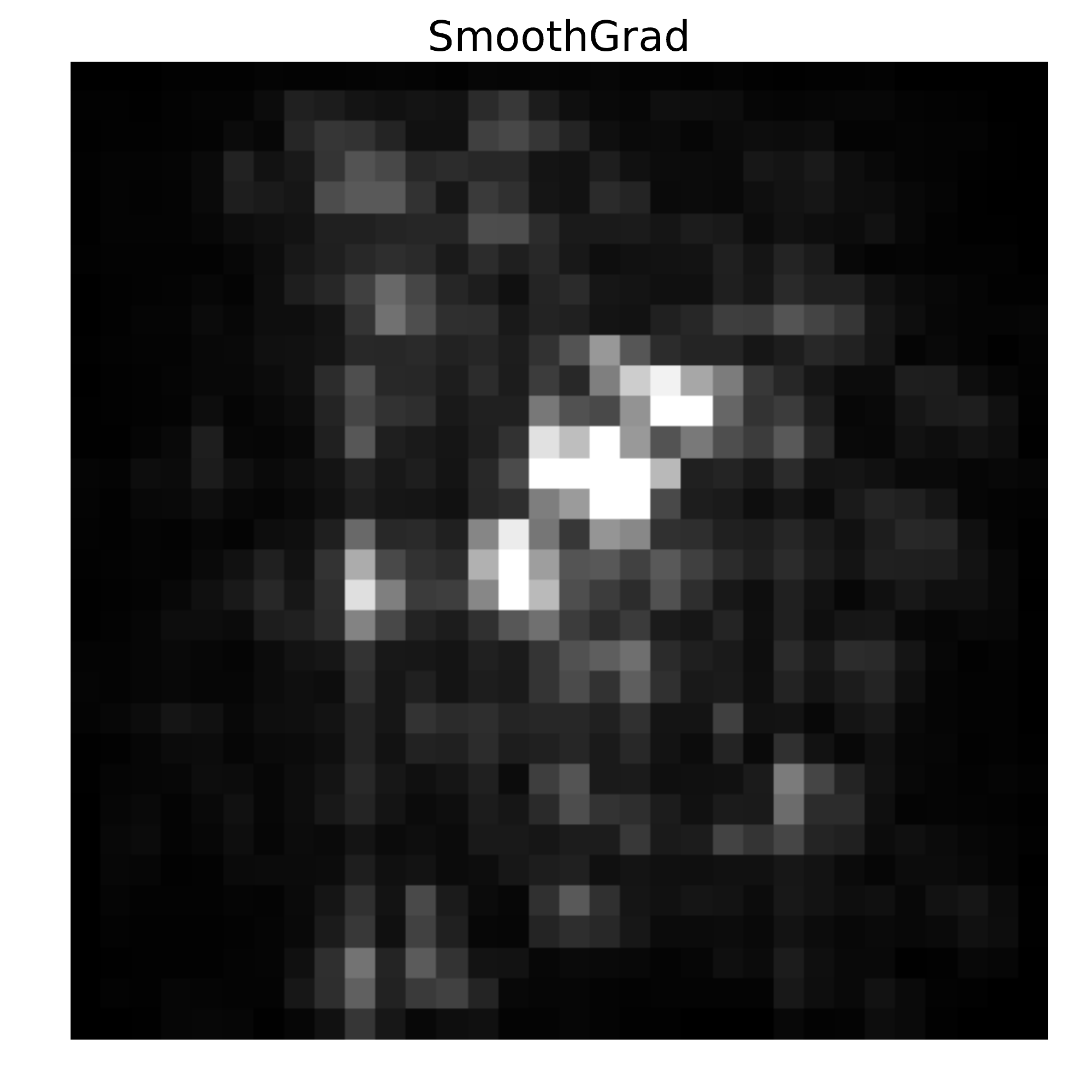} &
\hspace*{-5mm} \includegraphics[width=0.2\columnwidth]{./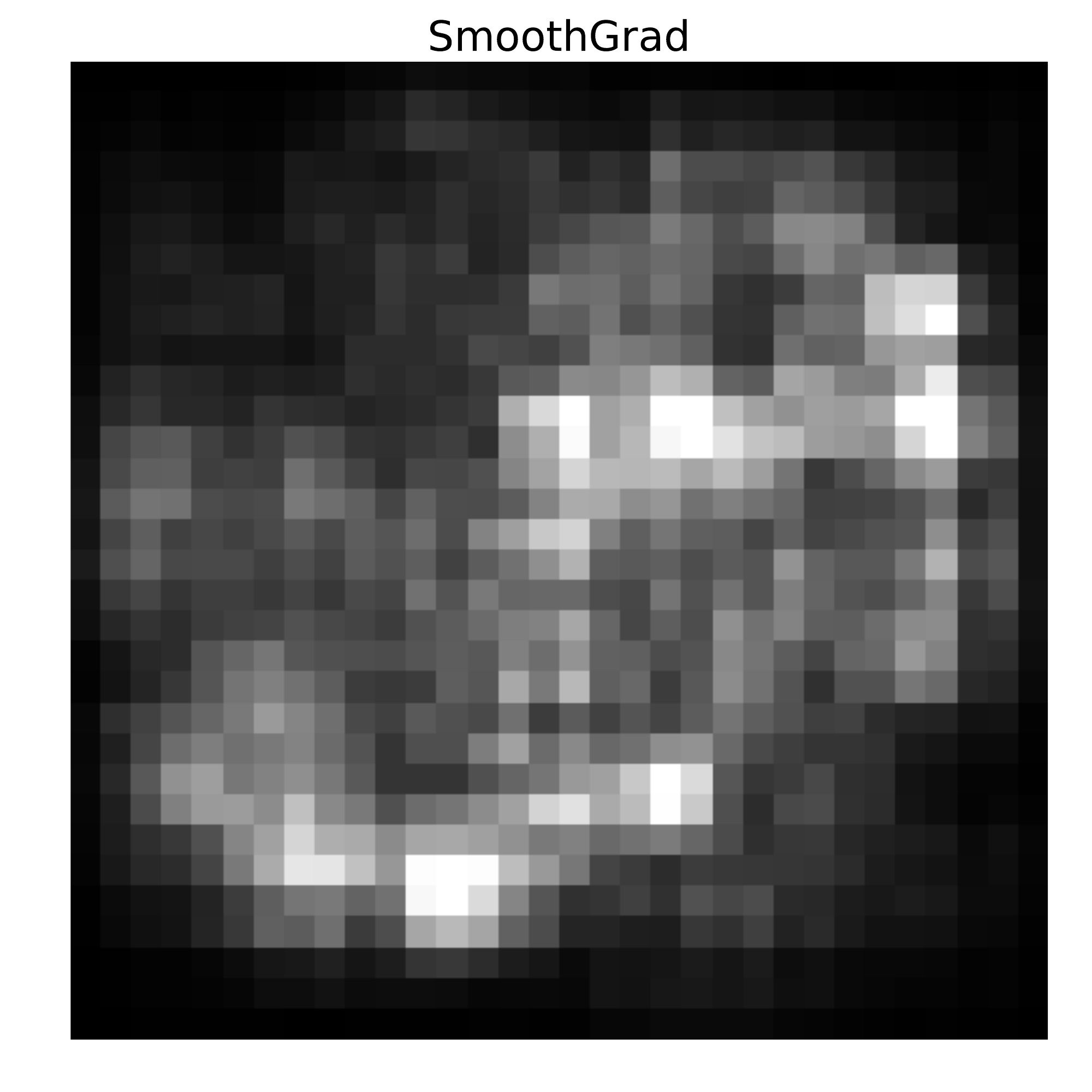} &
\hspace*{-5mm} \includegraphics[width=0.2\columnwidth]{./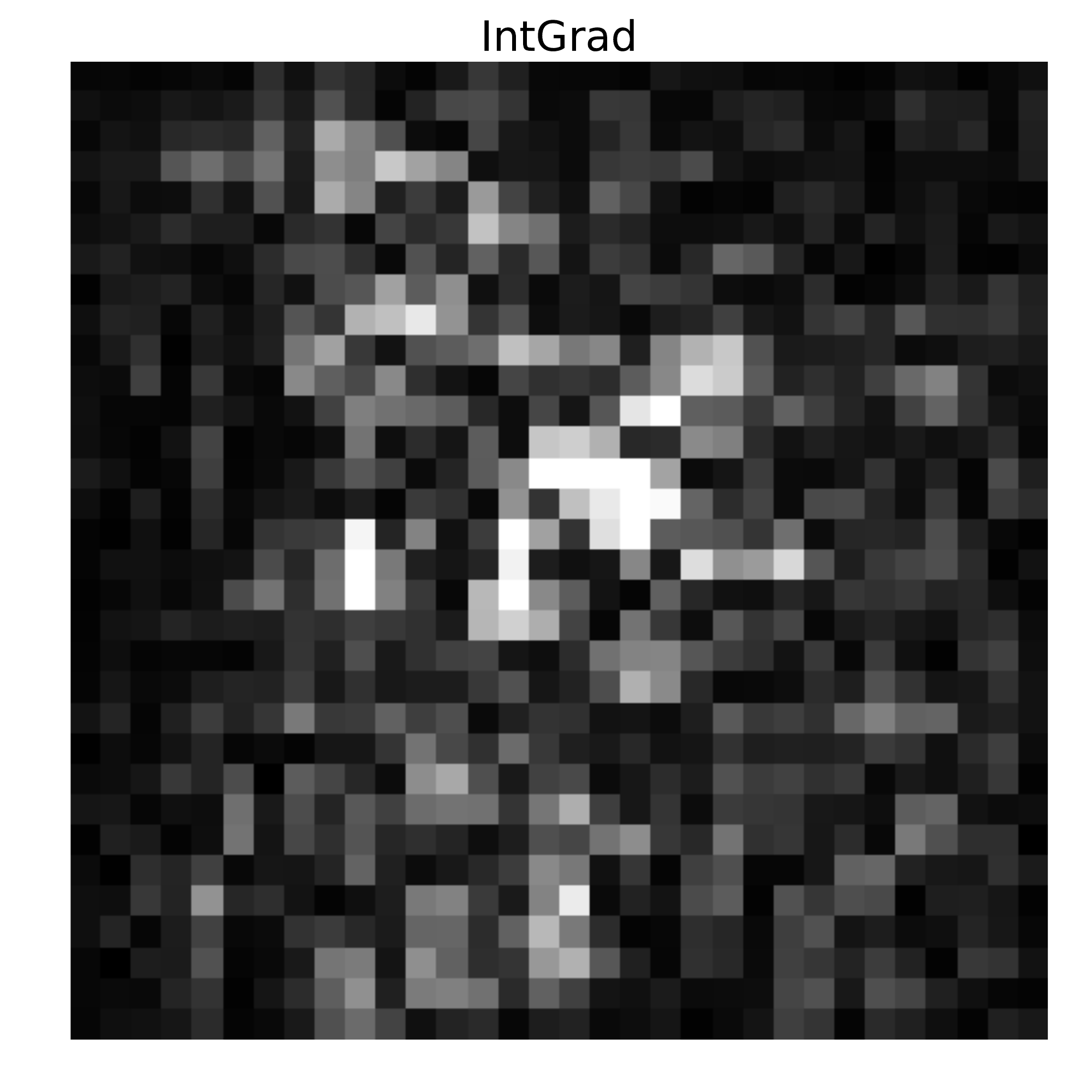} &
\hspace*{-5mm} \includegraphics[width=0.2\columnwidth]{./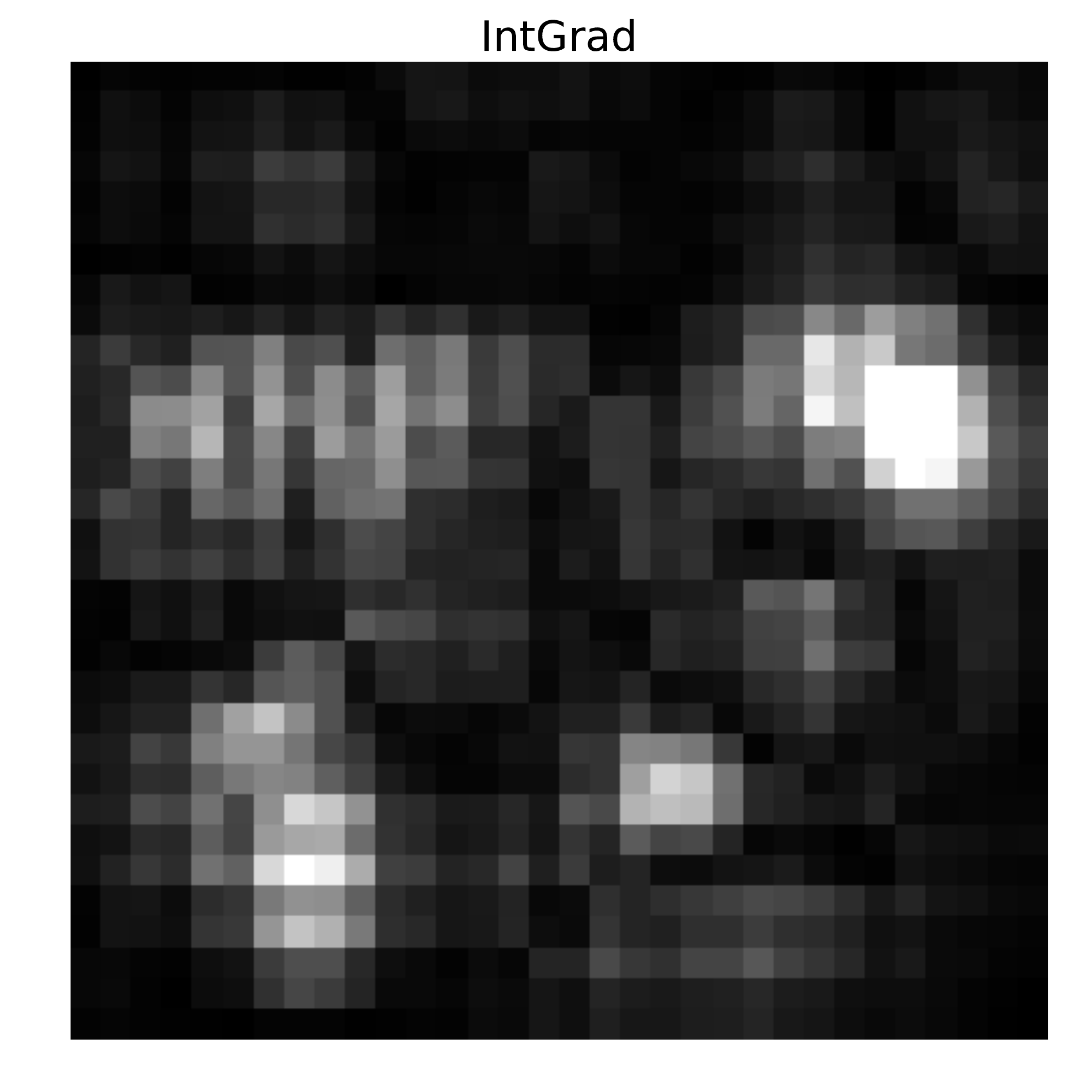} \\

\vspace{-3.5mm}
\end{tabular}
\caption{\small \textbf{Saliency maps for DUP and UVC models on the SVHN/CIFAR-10 disagreement task.} The plot shows two images from the blurred CIFAR-10 dataset and two images from the blurred SVHN dataset. The second column is SmoothGrad applied to the UVC model, and the third SmoothGrad applied to the DUP model. The third and fourth columns show IntGrad applied to the DUP and UVC models. We observe that the DUP and UVC models appear to be paying attention to different features of the dataset.}
\label{fig-saliency}
\end{figure}

\section{Details of DUP in the Medical Domain}
\label{sec-uncertainty-details}
As described in Section \ref{sec-medical-setting}, to train DUP models, we threshold the scores given by applying $U_{disagree}, U_{var}$ to the data $(x_i, \hat{\mathbf{p}}_i)$. Preliminary experiments in trying to directly regress onto the raw scores using mean-squared error performed poorly. 

We threshold the scores as follows. For $U_{var}$ we thresholded at approximately $2/9$, the variance when three doctors have more than an 'off by one' disagreement: more than a single disagreement, or a single grade disagreement.  

For $U_{disagree}$, where only the number of disagreements counts, we thresholded at $0.3$, to prioritize being sensitive enough to disagreement cases and having more than $20\%$ of the data marked as high disagreement. We also experimented with using soft targets for disagreement classification, but the results (Table \ref{table-label-count}) showed that this was less effective than than having the binary $0/1$ scores, likely because this makes the classification problem more like a regression. 

Our model consists of an Inception-v3 base, with the ImageNet head removed and a small (2 hidden layer, $300$ hidden units) fully connected neural network using Inception-v3 PreLogits to perform DUP. The full Inception-v3 network is trained with a batch size of $8$ and learning rate $0.001$ with the Adam optimizer. For training only the small neural network, we use the SGD with momentum optimizer, a batch size of $32$ and learning rate of $0.01$. 

\begin{table}
  \centering
  \hskip-1.0cm\begin{tabular}{lllllll}
    \toprule 
    \textbf{Model Type} & \textbf{$T_{test}$ AUC} & \textbf{Majority} & \textbf{Median} & \textbf{Majority$=1$} & \textbf{Median$=1$} & \textbf{Referable} \\
    \midrule
    
    Disagree Soft Targets  & \hspace{3mm} $76.3\%$ & \hspace{3mm} $79.0\%$ & \hspace{3mm} $78.7\%$ & \hspace{3mm} $81.6\%$ & \hspace{3mm} $79.0\%$ &  \hspace{3mm} $84.7\%$  \\
      
      Disagree-P & \hspace{3mm} $\textbf{78.1\%}$ & \hspace{3mm} $\textbf{81.0\%}$ & \hspace{3mm} $80.8\%$ & \hspace{3mm} $\textbf{84.6\%}$ & \hspace{3mm} $\textbf{81.9\%}$ & \hspace{3mm} $\textbf{86.2\%}$   \\
      
      Disagree-PC & \hspace{3mm} $\textbf{78.1\%}$ & \hspace{3mm} $80.9\%$ & \hspace{3mm} $\textbf{80.9\%}$ & \hspace{3mm} $84.5\%$ & \hspace{3mm} $81.8\%$ & \hspace{3mm} $\textbf{86.2\%}$  \\
      
    \bottomrule
  \end{tabular}
  \caption{\small Using soft targets for disagreement prediction does not help in performance (AUC). Holdout AUC column corresponds to Disagreement Prediction Performance in Table \ref{table-var-prediction}, other columns refer to Table \ref{table-adj-agreement} in main text.}
\label{table-label-count}
\end{table}

\begin{table*}[t]
  \centering
  \hskip-1.0cm\begin{tabular}{lll}
    \toprule 
    \textbf{Task}  & \textbf{Model Type} & \textbf{Performance (AUC)} \\
    \midrule
     Variance Prediction      & Variance-E2E-2H & \hspace{5mm} $72.7\%$      \\
            \midrule
            \midrule
    Variance Prediction      & Variance-LR & \hspace{5mm} $72.4\%$    \\
    Disagreement Prediction   & Disagree-LR & \hspace{5mm} $75.9\%$ \\
    \bottomrule
  \end{tabular}
  \vspace{5mm}
  \caption{Additional results from table \ref{table-var-prediction}.}
   \label{table-additional-results-dup}
\end{table*}

\textbf{Prelogits, Calibration and Regularization}
Our training data for DUP models, $T^{(var)}_{train}, T^{(disagree)}_{train}$, only consists of $x_i$ with more than one label, and is too small to effectively train an Inception sized model end to end. Therefore, we use the prelogit embeddings of $x_i$ from a pretrained DR classification model (Histogram-E2E), and training smaller models on top of these embeddings. We do this both for the baseline, getting the Histogram-PC model, as well as the DUP models, Variance-PRC and Disagree-PC.

The \textit{C} suffix of all of these models corresponds to calibration on the logits. Following the findings of \cite{guo2017calibration}, we apply temperature scaling on the logits: we set the predictions of the model to be $f(\mathbb{z}/T)$ where $f$ is the softmax function, applied pointwise, and $\mathbb{z}$ are the logits. We initialize $T$ to $1$, and then split e.g. $T_{train}^{(disagree)}$ into a $T_{train}^{'(disagree)}$ and a $T_{valid}^{'(disagree)}$, with $10\%$ of the data in the validation set. We train as normal on $T_{train}^{'(disagree)}$, with $T$ fixed at $1$, and then train on $T_{valid}^{'(disagree)}$, by \textit{only} varying the temperature $T$, and holding all other parameters fixed. 

The use of Prelogit embeddings and Calibration gives the strongest performing baseline UVC and DUPs: Histogram-PC, Variance-PRC and Disagree-PC. For the Variance DUP, an additional regularization term is added to the loss by having a separate regressing on the raw variance value. 

\textbf{Additional Model: Variance-E2E} 
We tried a variant of Variance-E2E, Variance-E2E-2H, which has one head for predicting variance and the other for classifying, to enable usage of all the data. We then evaluate the variance head on $T_{test}$, but in fact noticed a small drop in performance, Table \ref{table-additional-results-dup}.

\textbf{Do we need the Prelogit embeddings?}
We tried seeing if we could match performance by training on pretrained classifier logits instead of the prelogit embeddings. Despite controlling for parameter difference by experimenting with more hidden layers, we found we were unable to match performance from the prelogit layer, Table \ref{table-additional-results-dup}, compare to Table \ref{table-var-prediction}. This demonstrates that some information is lost between the prelogit and logit layers.

\section{Additional Results: Entropy, Finite Sample Behavior and Convergence Analysis}
We performed additional experiments to further understand the properties of DUP and UVC models. For these experiments, we compare a representative DUP model, \textit{Disagree-P}, to a representative UVC model, \textit{Histogram-PC}.

Theorem \ref{thm-dup-unbiased} states that DUP offers benefits over UVC for \textit{concave} target uncertainty functions. This is a natural property for measures of spread, simply stating that the measure of spread increases with averaging (probability distributions). In the main text, we concentrate on two such specific uncertainty functions, $U_{var}$ and $U_{disagree}$, which are particularly suited to the domain. However, other standard uncertainty functions, such as entropy, are also concave. We test the performance of DUP (\textit{Disagree-P}) and UVC (\textit{Histogram-PC}) with $U_{entropy}$ as the target function. 

\begin{table*}[t]
  \centering
  \hskip-1.0cm\begin{tabular}{llll}
    \toprule 
     \textbf{Task}  & & \textbf{Model Type} & \textbf{Performance (AUC)} \\
    \midrule
    Entropy Prediction  & UVC    & Histogram-PC & \hspace{5mm} $75.5\%$    \\
    Entropy Prediction  & DUP  & Disagree-P & \hspace{5mm} $\textbf{77.2}\%$ \\
    \bottomrule
  \end{tabular}
  \vspace{5mm}
  \caption{DUP and UVC models trained with entropy as a target function. Again, we see that the DUP model outperforms the UVC model.}
   \label{table-entropy}
\end{table*}

The results are shown in Table \ref{table-entropy}, where we again see that DUP outperforms UVC.

\begin{figure}
\begin{tabular}{cc}
  \centering
  \hspace*{0mm}
 \includegraphics[width=0.45\columnwidth]{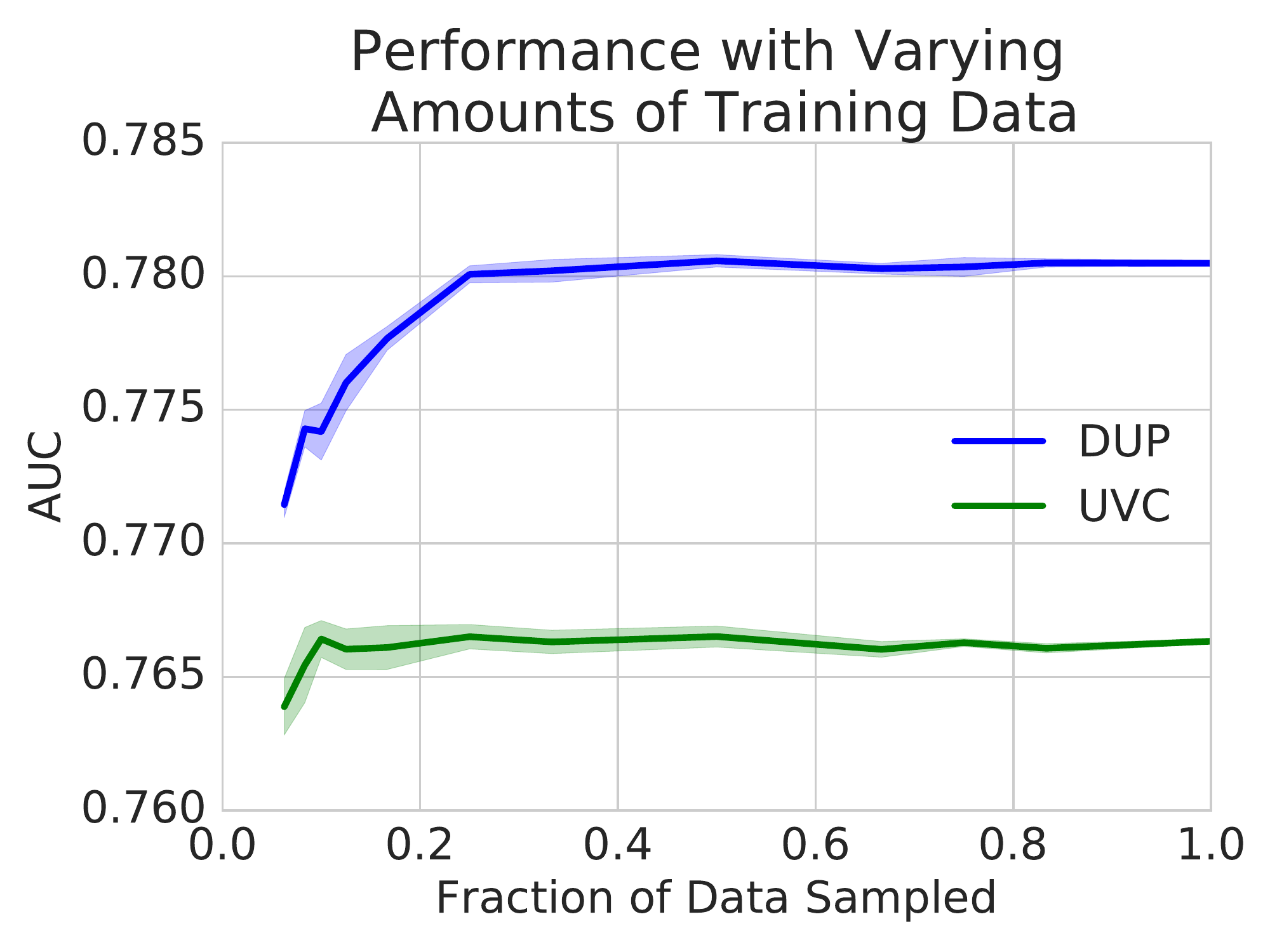} &
 \hspace*{2mm} \includegraphics[width=0.45\columnwidth]{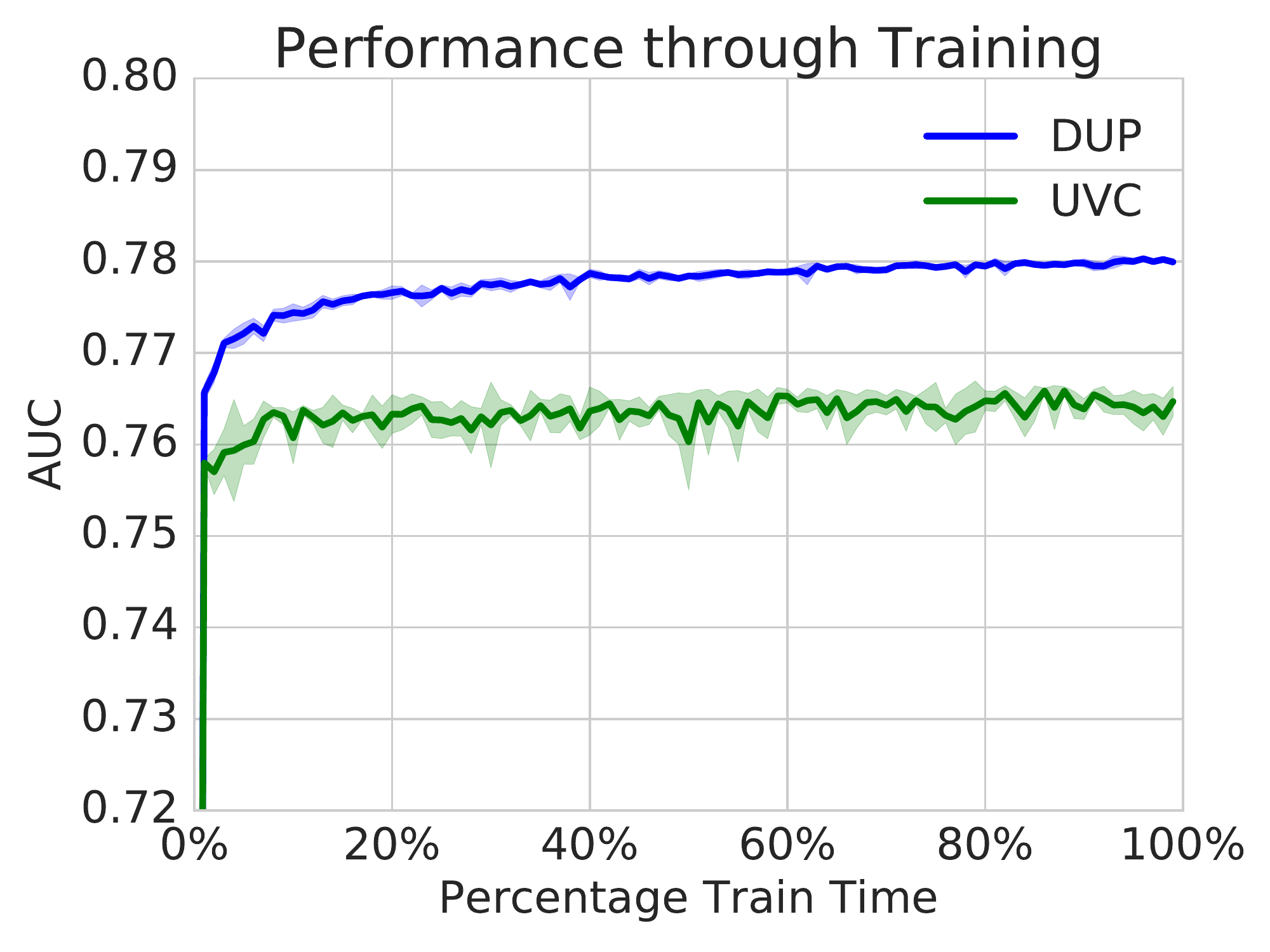}
 \end{tabular}
  \caption{\small \textbf{DUP and UVC performance during training and when varying train data size.} We study DUP (Disagree-P) and UVC (Histogram-PC) performance for varying amounts of training data. We find that the gap in performance is robust to variations in dataset size. For more than $30\%$ of the data, performance of DUP and UVC remains relatively constant, supporting the applicability of Theorem \ref{thm-dup-unbiased} in the finite data setting. The right plot looks at performance through training, with the gap appearing rapidly early in training, and slowly widening.}
  \label{fig-analysis-experiments}
  \vspace*{-5mm}
\end{figure}

We also study how model performance is impacted by different training set sizes (similar to the analysis in \cite{chen2018discriminatory}). We subsample different amounts of the original training set $T_{train}^{(disagree)}$, and train DUP and UVC models on this subset. The results over $5$ repeats of different subsamples and optimization runs are shown in Figure \ref{fig-analysis-experiments}.

We see that the performance gap between DUP and UVC is robust to train data size differences. Additionally, when $\geq 30\%$ of the training data is used, DUP and UVC performance remains relatively constant. This supports carrying over the results of Theorem \ref{thm-dup-unbiased}) and the full joint distribution $f(o, \mathbf{y})$ to the finite data setting.

We also study convergence of DUP and UVC models. We find that the performance gap between DUP and UVC manifests very early in training (Figure \ref{fig-analysis-experiments}, right plot), and continues to gradually widen through training.

\section{Background on the Wasserstein Distance}
\label{sec-app-wasserstein}
Given two probability distributions $f, g$, and letting $\Pi(f,g)$ be all product probability distributions with marginals $f$, $g$, the Wasserstein distance between $p, q$ is
\[ ||f - g||_w = \inf_{\pi \in \Pi(f,g)} \mathbb{E}_{(r,t) \sim \pi}\left[d(r, t)\right] \]
where $d(,)$ is some metric. This distance has connections to optimal
transport, and corresponds to the minimum cost (with respect to $d(,)$) of
moving the mass in distribution $f$ so that it is matches the
mass in distribution $g$. 
We can represent the amount of mass to move from $r$ to $t$
with $\pi(r, t)$. To be consistent with the mass at the start,
$f(r)$, and the mass at the end $g(t)$ we must have that $\int_{t'}
\pi(r, t') = f(r)$ and $\int_{r'} \pi(r', t) = g(t)$.

The result in the main text follows from the following theorem:
\begin{theorem}
If $f, g$ are (discrete) probability distributions and $g$ is a point mass distribution at $t_0$, then $\pi \in \Pi(f,g)$ is uniquely defined as:
\[ \pi(r, t) = 
\begin{cases}
                                   0 & \text{if $t \neq t_0$} \\
                                   $f(r)$ & \text{if $t=t_0$}
  \end{cases}
\]
\end{theorem}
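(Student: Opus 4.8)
The plan is to derive $\pi$ directly from the two marginal constraints that define membership in $\Pi(f,g)$, using nothing more than the non-negativity of $\pi$. Recall that $\pi \in \Pi(f,g)$ means $\pi$ is a joint distribution whose marginals are $f$ and $g$; in the discrete setting this amounts to the pair of linear constraints $\sum_t \pi(r,t) = f(r)$ for every $r$ and $\sum_r \pi(r,t) = g(t)$ for every $t$, together with $\pi(r,t) \geq 0$. The whole argument is a matter of showing that these constraints leave no freedom once $g$ is a point mass.

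First I would exploit the point-mass structure of $g$. Since $g$ places all of its mass at $t_0$, we have $g(t) = 0$ for every $t \neq t_0$, so the column-sum constraint reads $\sum_r \pi(r,t) = 0$ for each such $t$. Because every summand $\pi(r,t)$ is non-negative and the total is zero, each summand must itself vanish, which gives $\pi(r,t) = 0$ whenever $t \neq t_0$. Next I would feed this into the remaining marginal constraint: fixing $r$ and summing over $t$, all terms with $t \neq t_0$ drop out, so $\pi(r,t_0) = \sum_t \pi(r,t) = f(r)$. This pins $\pi$ down completely and shows it equals the claimed expression.

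To close the argument I would verify that this $\pi$ is in fact a valid coupling, establishing existence alongside uniqueness: its row sums recover $f$ by construction, and its single nonzero column sums to $\sum_r f(r) = 1 = g(t_0)$, so both marginals match and $\pi \geq 0$ throughout. I do not expect a genuine obstacle here; the one point requiring care is the appeal to non-negativity (a sum of non-negative terms equal to zero forces each term to zero), which is exactly what collapses the feasible polytope $\Pi(f,g)$ to a single point rather than a higher-dimensional set of couplings.

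Finally, I would note how this yields the formula stated in the main-text theorem. Because $\Pi(f,g)$ is the singleton $\{\pi\}$, the infimum defining the Wasserstein distance is attained trivially, and $\|f-g\|_w = \sum_{r,t} \pi(r,t)\, d(r,t) = \sum_r \pi(r,t_0)\, d(r,t_0) = \sum_r f(r)\, d(r,t_0) = \mathbb{E}_{C \sim f}[d(C,t_0)]$, which is the expected-distance expression used throughout the ranking evaluation.
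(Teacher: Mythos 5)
Your proof is correct and takes essentially the same route as the paper's: use the marginal constraint $\sum_{r} \pi(r,t) = g(t) = 0$ for $t \neq t_0$ (together with non-negativity) to kill all mass off the column $t_0$, then read off $\pi(r,t_0) = f(r)$ from the row marginal. You merely make explicit two points the paper leaves implicit --- that a zero sum of non-negative terms forces each term to vanish, and that the resulting $\pi$ is indeed a valid coupling --- so there is no substantive difference in approach.
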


\begin{proof}
The proof is direct: for $t \neq t_0$, we must have $\int_{r'} \pi(r', t) = g(t) = 0$, and so $\int_{t'} \pi(r, t') = \pi(r, t_0) = f(r)$. 
\end{proof}

We consider three different distances $d(,)$: 
\begin{enumerate}
\item[1] \textit{Absolute Value} $d(r,t) = |r - t|$. This follows an interpretation in which the grades are equally spaced, so that all successive grade differences have the same weight.
\item[2] \textit{2-Wasserstein Distance} $d(r,t) = (r - t)^2$, and, to make into a metric
\[ ||f - g ||_w =  \left( \mathbb{E}_{(r,t) \sim \pi}\left[d(r, t)\right] \right)^{1/2} \]
This adds a higher penalty for larger grade differences.
\item[3] \textit{Binary Disagreement} We set $d(r, t) = 0$ if $r = t$ and $1$ otherwise.
\end{enumerate}

\end{document}